\definecolor{BrickRed}{rgb}{0.6,0,0} 
\newtheorem{theorem}{Theorem}[section]
\newtheorem{lemma}[theorem]{Lemma}
\newtheorem{proposition}[theorem]{Proposition}
\newtheorem{corollary}[theorem]{Corollary}
\numberwithin{equation}{section}
\newcommand{\qed}{\rule{7pt}{7pt}}
\newenvironment{proof}{\noindent{\textbf{Proof}}\hspace*{.5em}}{\qed\vspace{0.125in}}
\renewcommand{\cite}{\citet}
\newtheorem{assumption}[theorem]{Assumption}
\newcommand{\R}{\mathbb{R}}
\newcommand{\E}{\mathbb{E}}
\renewcommand{\Pr}{\mathbb{P}}
\newcommand{\minimize}{\text{minimize}}
\newcommand{\st}{\text{subject to}}
\newcommand{\spn}{\operatorname{span}}
\renewcommand{\phi}{\varphi}
\renewcommand{\theta}{\vartheta}
\def\ddefloop#1{\ifx\ddefloop#1\else\ddef{#1}\expandafter\ddefloop\fi}
\def\ddef#1{\expandafter\def\csname c#1\endcsname{\ensuremath{\mathcal{#1}}}}
\def\ddef#1{\expandafter\def\csname s#1\endcsname{\ensuremath{\mathsf{#1}}}}
\def\ddef#1{\expandafter\def\csname b#1\endcsname{\ensuremath{\mathbf{#1}}}}
\def\ddef#1{\expandafter\def\csname b#1\endcsname{\ensuremath{\mathbf{#1}}}}
\def\E{\mathop{\mathbb{E}}}
\def\Pr{\mathop{\mathbb{P}}}
\def\var{\operatorname{Var}}
\def\Reals{\mathbb{R}}
\def\dif#1{\mathrm{d}{#1}\,} 
\definecolor{BrickRed}{rgb}{0.6,0,0}
\def\set@curr@file#1{\def\@curr@file{#1}} 
\title{Interpolating Classifiers Make Few Mistakes}
\author[1]{Tengyuan Liang\thanks{\tt Email:\href{tengyuan.liang@chicagobooth.edu}{tengyuan.liang@chicagobooth.edu}. Liang's research is generously supported by the NSF Career Award (DMS-2042473), the George C. Tiao Fellowship and the William S. Fishman Research Fund.}}
\author[2]{Benjamin Recht\thanks{\tt Email:\href{brecht@berkeley.edu}{brecht@berkeley.edu}. Recht's research is generously supported in part by ONR awards N00014-20-1-2497 and N00014-18-1-2833, NSF CPS award 1931853, and the DARPA Assured Autonomy program (FA8750-18-C-0101).}}
\affil[1]{University of Chicago}
\affil[2]{University of California, Berkeley}
\date{}
\title[Label Interpolation]{Interpolating Classifiers Make Few Mistakes}
\begin{document}

\maketitle

\iftoggle{arxiv}{\vspace{-0.25in}}{}

\begin{abstract}%
    This paper provides elementary analyses of the regret and generalization of minimum-norm interpolating classifiers (MNIC). The MNIC is the function of smallest Reproducing Kernel Hilbert Space norm that perfectly interpolates a label pattern on a finite data set. We derive a mistake bound for MNIC and a regularized variant that holds for all data sets. This bound follows from elementary properties of matrix inverses. Under the assumption that the data is independently and identically distributed, the mistake bound implies that MNIC generalizes at a rate proportional to the norm of the interpolating solution and inversely proportional to the number of data points. This rate matches similar rates derived for margin classifiers and perceptrons. We derive several plausible generative models where the norm of the interpolating classifier is bounded or grows at a rate sublinear in $n$. We also show that as long as the population class conditional distributions are sufficiently separable in total variation, then MNIC generalizes with a fast rate.
\end{abstract}

\iftoggle{arxiv}
{}
{
\begin{keywords}%
  Generalization, Regret, Mistake Bound, Interpolation, Least-squares classification.%
\end{keywords}
}

\section{Introduction}\label{sec:intro}
Using a squared loss for classification problems remains a controversial topic in machine learning. Though popular for regression, the squared loss makes little intuitive or semantic sense for classification problems. A squared loss appears to penalize models for being ``too correct'' when the prediction of a label is much greater than $1$. Moreover, the labels in classification are arbitrary, so the minimum square error doesn't convey much information about the latent prediction problem. Similarly, minimum-norm interpolation, the limiting solution of ridge regression as the regularization parameter tends to zero, appears to be a curious choice for a classifier. Forcing a function to be equal to an arbitrary label set could be unnecessarily aggressive if we only aim to have our classifier predict the correct sign on our data.

Despite these reservations, minimum-norm interpolation classifiers (MNIC) and regularized least-squares classification (RLSC) work exceptionally well in practice. Recent investigations by \cite{Shankar20a} demonstrated that there was no advantage to using more sophisticated classification techniques on popular contemporary data sets. From a theoretical perspective, stochastic gradient descent on a variety of empirical risk minimization problems will converge to the MNIC solution under mild assumptions.  \cite{jacot2018neural} and \cite{heckel2020compressive} have shown that neural nets trained with stochastic gradient descent approximate the minimum-norm solutions of appropriate Reproducing Kernel Hilbert Spaces.

On top of these connections and empirical results, least-squares methods have many attractive features that argue in their favor. Their solutions can be written out algebraically, and we can lean on powerful linear algebra tools to compute them at large scale (see \cite{avron2017faster,ma2017diving,rudi2017falkon,wang2019exact,shankar2018numpywren}, for example).  Many auxiliary quantities such as the leave-one-out error can also be computed in closed form. If these methods are at all competitive, the machine learning community should encourage their use.

In this paper we provide an elementary analysis of MNIC, partially helping to explain the method's success. We first present a mistake bound for minimum-norm interpolation classification that holds unconditionally of how the data was generated. The proof follows immediately from two applications of the matrix inversion lemma. Moreover, we present two simple geometric and algorithmic proofs that shed further insights into the properties of MNIC and RLSC and the structure of QR decompositions. 

Adding the assumption that the data is generated i.i.d., our mistake bound and an application of Markov's inequality implies that MNIC generalizes with a rate of $B_{n}^2/n$ where $B_{n}$ denotes the expected norm of the MNIC when interpolating $n$ i.i.d. sampled data points.  This compares favorably with the generalization bounds of Vapnik and Chervonenkis for the perceptron which show the generalization error scales as $\frac{1}{M_{n+1}^2 n}$ where $M_{n+1}$ denotes the expected size of the margin for a sample of $n+1$ data points \cite{VapnikChervonenkis1974Book}. Since for the maximum margin solution, the margin is the inverse of the norm of the separating hyperplane, our bound tracks a similar scaling as these classic margin-based results.

A key feature of MNIC and RLSC is that the expected norm can be estimated for a variety of probabilistic models of data. By leveraging concentration inequalities and random matrix tools, we can analyze a variety of plausible data generation schemes and show that as long as there is certain separation between the distributions of the two classes, then we can expect the interpolant norm to not grow too quickly.

\section{Related Work}\label{sec:related}
Using least-squares loss for classification has a long history in machine learning and statistics. Its modern popularization was by \cite{suykens1999least} and connections to general kernel machines were studied by \cite{RifRLSC}. While many authors use least-squares classification as their default tool (for example, ~\cite{rudi2017falkon,Shankar20a}), it is still not common practice to use a squared loss for classification. Theory bounds for regression with bounded labels do apply to least-squares classification, and such results exist. For example,~\cite{lee1998importance} showed agnostic learning is possible with a squared loss. The results in this work focus on a particular class of interpolators that are amenable to a simpler analysis.

While we focus on interpolating classifiers, our inspiration comes from work on maximum margin classifiers.  \cite{Novikoff1962} famously derived a mistake bound for the perceptron, and \cite{VapnikChervonenkis1974Book} used this bound and a leave-one-out argument to turn the perceptron mistake bound into a generalization bound. \cite{02.Vapnik.Chapelle} prove a similar generalization bound for the maximum-margin classifier using a similar argument, though their proof requires a strong assumption about support vectors not changing when data is resampled. A more general theory of generalization for margin classifiers using Rademacher complexity was developed by~\cite{koltchinskii2002empirical}. These bounds yielded suboptimal rates in the case when the data was separable, and ~\cite{srebro2010smoothness} provided an argument to yield fast rates. All margin bounds require the size of the margin to not shrink too quickly as the number of data points increases.

A lesser known body of work, initiated by Bernstein, attempted to understand the complexity of estimating a planted linear model using online algorithms. \cite{bernstein1992absolute}, using an online algorithm identical to the one presented in Section~\ref{sec:ridge}, showed that the online error could be bounded by the norm of the planted solution times the maximum norm of the data points. This bound has a similar flavor to our bound, though necessarily assumes a planted solution that is observed via noiseless linear measurements. Later, \cite{klasner1995noise} showed how to extend Bernstein's work to the noisy case by running an online version of ridge regression.  Our work differs from this earlier work in several ways. First, the proof of our main online bound follows from a more direct and short argument. Second, we make no assumptions about the existence of a planted model. Rather, to bound the error with $n$ data points, we merely assume that the interpolation problem is solvable for any data set of size at most $n$. That is, our labels $y$'s are completely arbitrary, and need not be realized by any finite dimensional linear model. Though our work focuses on interpolation, we show how to immediately extend it to ridge regression in Section~\ref{sec:ridge}, yielding improvements on Klasner and Simon's bounds that hold with more generality.

Many recent papers have used generative models to understand how such margins scale (see, for example, \cite{deng2019model,montanari2019generalization,liang2020precise,chatterji2020finite}). In this work, we study scaling on the norm of the interpolating function using similar ideas. Leveraging recent developments in non-linear random matrices, \cite{liang2020JustInterpolate} and later in \cite{liang2020MultipleDescent} studied the generalization of kernel ridgeless regression. \cite{bartlett2020benign} studied the generalization of minimum-norm interpolants in the context of infinite-dimensional Gaussian process regression. \cite{hastie2019surprises} applied precise asymptotic tools to analyze random non-linear feature regression. Though our generalization results can be extended to the case of regression, we focus here on generative models of classification, and show these generalize with mild separation assumptions on the data generating process.

Connections between the Perceptron for SVM and MNIC are recently investigated in \cite{muthukumar2020classification} using a distinctive approach compared to ours. They construct generative models such that with sufficient overparametrization, all data points are support vectors. This means the SVM returns the same solution as MNIC under such models. However, our work directly works with MNIC and does not require a solution coincident with an SVM solution. Moreover, our generalization bounds involve a norm rather than a margin, and we demonstrate that these norms are often simple to calculate or bound for complex models in both theory and practice.

\section{A Mistake Bound for MNIC}\label{sec:mistake}

Let $\mathcal{H}$ be a Reproducing Kernel Hilbert Space (RKHS) with embedding functions $\varphi_x$ so that $k(x,z)=\langle \varphi_x, \varphi_z \rangle_K$. 
Suppose that we are given a data sequence $\{(x_j,y_j)\}_{j\in \mathbb{N}_{+}}$ and consider the MNIC optimization problem on the data set $S_i :=\{(x_j,y_j)\}_{j=1}^i$
\begin{equation}\label{eq:rkhs-interpolate}
	\begin{array}{ll}
	\minimize & \|f\|_K \\
	\st & f(x_j) = y_j,~~j=1,\ldots, i\,.
	\end{array}
\end{equation}
When it exists, let $\widehat{f}_{S_i}$ denote the optimal solution of this optimization problem. 

The key to our analysis will be controlling the error of $\widehat{f}_{S_{i}}$ on the next data point in the sequence, $x_{i+1}$. To do so, we use a simple identity that follows from linear algebra. Let
$$
	s_{i} := \operatorname{dist} \left(\spn( \varphi_{x_1},\ldots, \varphi_{x_{i-1}}),\varphi_{x_{i}}\right)\,,
$$
where the distance measures the RKHS norm of the embedding function.
Then we have the following
\begin{lemma}\label{lemma:sequential-error}
For any $i \in \mathbb{N}_{+}$, and any data sequence $\{ (x_j, y_j)\}_{j \in \mathbb{N}_{+}}$, suppose that the solutions $\widehat f_{S_{i-1}}$ and $\widehat f_{S_{i}}$ in Problem \eqref{eq:rkhs-interpolate} exist. Then they satisfy 
$$
	\big(y_{i}- \widehat{f}_{S_{i-1}}(x_{i}) \big)^2 = s_{i}^2 \big(\| \widehat{f}_{S_{i}}\|_K^2-\| \widehat{f}_{S_{i-1}}\|_K^2 \big)\,.
$$
\end{lemma}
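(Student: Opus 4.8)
The plan is to strip the RKHS down to the finite-dimensional Gram matrix of the embeddings and then read the identity off the block-matrix inversion formula (the ``matrix inversion lemma''). First I would use the standard fact that a minimum-norm interpolant lies in the span of the $\varphi_{x_j}$'s: write $\widehat f_{S_i}=\sum_{j\le i}\alpha_j\varphi_{x_j}$ with $K_i\alpha=y_{1:i}$, where $K_i=(k(x_a,x_b))_{a,b\le i}$, $y_{1:i}=(y_1,\dots,y_i)^\top$, and the hypothesis that $\widehat f_{S_i}$ exists is exactly solvability of this linear system; consequently $\|\widehat f_{S_i}\|_K^2=\alpha^\top K_i\alpha=y_{1:i}^\top K_i^{-1}y_{1:i}$, and likewise for $i-1$. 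Two further bookkeeping identities: with $b=(k(x_1,x_i),\dots,k(x_{i-1},x_i))^\top$ and $c=k(x_i,x_i)$, the orthogonal projection of $\varphi_{x_i}$ onto $\spn(\varphi_{x_1},\dots,\varphi_{x_{i-1}})$ is $\sum_j\beta_j\varphi_{x_j}$ with $\beta=K_{i-1}^{-1}b$, so by Pythagoras $s_i^2=c-b^\top K_{i-1}^{-1}b$, i.e.\ $s_i^2$ is precisely the Schur complement of the corner entry of $K_i$; and $\widehat f_{S_{i-1}}(x_i)=\langle\widehat f_{S_{i-1}},\varphi_{x_i}\rangle=b^\top K_{i-1}^{-1}y_{1:i-1}$.

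With these in hand, I would partition $K_i$ into its $(i-1)$-block $K_{i-1}$, off-diagonal block $b$, and corner $c$, and apply the block-inverse (matrix inversion) formula to express $K_i^{-1}$ through $K_{i-1}^{-1}$, $b$, and $s_i^2$. Substituting the partitioned $K_i^{-1}$ into $y_{1:i}^\top K_i^{-1}y_{1:i}$ and subtracting $y_{1:i-1}^\top K_{i-1}^{-1}y_{1:i-1}$, the remaining terms assemble into a perfect square and collapse to $s_i^{-2}\big(y_i-b^\top K_{i-1}^{-1}y_{1:i-1}\big)^2=s_i^{-2}\big(y_i-\widehat f_{S_{i-1}}(x_i)\big)^2$. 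Hence $\|\widehat f_{S_i}\|_K^2-\|\widehat f_{S_{i-1}}\|_K^2=s_i^{-2}\big(y_i-\widehat f_{S_{i-1}}(x_i)\big)^2$, and multiplying through by $s_i^2$ yields the stated equation.

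I expect the only genuine obstacle to be degeneracy: when some $\varphi_{x_j}$ already lies in the span of its predecessors, $K_{i-1}$ or $K_i$ is singular, and one must pass to pseudoinverses and check that $\widehat f_{S_{i-1}}(x_i)$, $s_i^2$, and the Schur-complement step remain well defined, with existence of $\widehat f_{S_i}$ supplying the needed consistency. A clean way to sidestep this is a coordinate-free version of the same computation: decompose $\varphi_{x_i}=u+v$ with $u$ in $V_{i-1}:=\spn(\varphi_{x_1},\dots,\varphi_{x_{i-1}})$ and $v\perp V_{i-1}$, $\|v\|_K=s_i$; since $\widehat f_{S_i}$ must interpolate $S_{i-1}$ and $V_{i-1}\perp v$, its $V_{i-1}$-component is forced to equal $\widehat f_{S_{i-1}}$, so $\widehat f_{S_i}=\widehat f_{S_{i-1}}+t\,v$; solving $\langle\widehat f_{S_{i-1}}+t\,v,\varphi_{x_i}\rangle=y_i$ gives $t=(y_i-\widehat f_{S_{i-1}}(x_i))/s_i^2$ (and when $s_i=0$, existence forces $y_i=\widehat f_{S_{i-1}}(x_i)$ and $\widehat f_{S_i}=\widehat f_{S_{i-1}}$, so both sides vanish); then $\|\widehat f_{S_i}\|_K^2=\|\widehat f_{S_{i-1}}\|_K^2+t^2 s_i^2$ by orthogonality, which is the identity again. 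I would lead with the matrix-inversion argument, as it is the shortest once the reduction to $K_i$ is made, and note the geometric argument as the robust alternative.
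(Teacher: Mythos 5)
Your proposal is correct and follows essentially the same route as the paper: your lead argument is the paper's algebraic proof (reduce to $y_{1:i}^\top K_i^{-1} y_{1:i}$, partition, identify $s_i^2$ with the Schur complement, and apply the block-inverse/Woodbury formula), and your coordinate-free fallback is exactly the paper's geometric proof via $\widehat f_{S_i}=\widehat f_{S_{i-1}}+t\,\Pi_{i-1}^{\perp}\varphi_{x_i}$. Your explicit treatment of the degenerate case $s_i=0$ is a small additional care the paper leaves implicit, but otherwise the arguments coincide.
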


We hold off on proving this Lemma until the next section where we give several proofs that highlight multiple aspects of minimum norm interpolation. With the Lemma in hand, however, we can now state and prove our two immediate results. Given $n$ data points, define in addition
$$
	s_{S_n\backslash i} := \operatorname{dist} \left(\spn( \varphi_{x_1},\ldots, \varphi_{x_{i-1}},\varphi_{x_{i+1}},\ldots,\varphi_{x_{n}}),\varphi_{x_{i}}\right)\,.
$$Our first result is a deterministic mistake bound that makes no assumptions about the data, as long as MNIC is well defined.

\begin{theorem}
	\label{thm:regret}
Let $R_n^2 = \max_{1\leq i\leq n} \| \phi_{x_i} \|_K^2$ 
and $r_n^2 = \min_{1\leq i \leq n} s_{S_n\backslash i}^2$. 
Then we have the following regret bound
\begin{align}\label{eq:regret-bound}
		r_n^2 \cdot \| \widehat f_{S_n} \|_{K}^2 \leq \sum_{i=1}^{n} \big(y_{i} - \widehat{f}_{S_{i-1}} (x_{i}) \big)^2 \leq R_n^2 \cdot \| \widehat f_{S_n} \|_{K}^2 \enspace,
\end{align}
In the binary classification setting with $\cY = \{\pm 1\}$, we further have
	\begin{align}\label{eq:regret-bound01}
		\sum_{i=1}^{n} \mathbbm{1} \big\{y_{i} \widehat{f}_{S_{i-1}} (x_{i}) \leq 0 \big\} \leq R_n^2 \cdot \| \widehat f_{S_n} \|_{K}^2
		\enspace.
	\end{align}
\end{theorem}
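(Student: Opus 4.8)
The whole theorem should fall out of Lemma~\ref{lemma:sequential-error} by telescoping, so the plan is to sum the identity
$\big(y_i - \widehat f_{S_{i-1}}(x_i)\big)^2 = s_i^2\big(\|\widehat f_{S_i}\|_K^2 - \|\widehat f_{S_{i-1}}\|_K^2\big)$
over $i = 1,\dots,n$. First I would fix the boundary convention: $S_0 = \emptyset$, so the interpolation problem \eqref{eq:rkhs-interpolate} has the trivial solution $\widehat f_{S_0} = 0$, and for $i=1$ the quantity $s_1 = \operatorname{dist}(\spn(\emptyset),\varphi_{x_1}) = \|\varphi_{x_1}\|_K$ is consistent with the definition. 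Writing $\Delta_i := \|\widehat f_{S_i}\|_K^2 - \|\widehat f_{S_{i-1}}\|_K^2$, the sum of the right-hand sides is $\sum_{i=1}^n s_i^2 \Delta_i$, and $\sum_{i=1}^n \Delta_i = \|\widehat f_{S_n}\|_K^2 - \|\widehat f_{S_0}\|_K^2 = \|\widehat f_{S_n}\|_K^2$ telescopes cleanly.

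The second ingredient is that $\Delta_i \ge 0$: enlarging the constraint set in \eqref{eq:rkhs-interpolate} (from $S_{i-1}$ to $S_i$) shrinks the feasible set, hence can only increase the minimum norm; I would state this as a one-line monotonicity remark (alternatively it is already visible from the Lemma, since the left side is a square and $s_i^2 \ge 0$). Given $\Delta_i \ge 0$, the upper bound is immediate from $s_i^2 = \operatorname{dist}(\spn(\varphi_{x_1},\dots,\varphi_{x_{i-1}}),\varphi_{x_i})^2 \le \|\varphi_{x_i}\|_K^2 \le R_n^2$, so $\sum_i s_i^2\Delta_i \le R_n^2 \sum_i \Delta_i = R_n^2\|\widehat f_{S_n}\|_K^2$. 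For the lower bound I would use that the span appearing in $s_i$ is a subspace of the span appearing in $s_{S_n\setminus i}$ (the latter includes the extra points $\varphi_{x_{i+1}},\dots,\varphi_{x_n}$), so distance to the larger subspace is no larger: $s_i \ge s_{S_n\setminus i} \ge r_n$, giving $\sum_i s_i^2 \Delta_i \ge r_n^2 \|\widehat f_{S_n}\|_K^2$. This comparison of the two distances is the one step where the direction of the inequality has to be checked carefully, but it is not hard.

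For the $0/1$ bound \eqref{eq:regret-bound01}, I would note that when $y_i \in \{\pm 1\}$ and $y_i \widehat f_{S_{i-1}}(x_i) \le 0$ we have
$\big(y_i - \widehat f_{S_{i-1}}(x_i)\big)^2 = 1 - 2 y_i \widehat f_{S_{i-1}}(x_i) + \widehat f_{S_{i-1}}(x_i)^2 \ge 1$,
so $\mathbbm{1}\{y_i \widehat f_{S_{i-1}}(x_i) \le 0\} \le \big(y_i - \widehat f_{S_{i-1}}(x_i)\big)^2$ termwise; summing and applying the upper bound from \eqref{eq:regret-bound} finishes it. I do not expect any serious obstacle here — the only things that need care are the $S_0$ convention, the monotonicity $\Delta_i \ge 0$, and the subspace-inclusion argument for $s_i \ge s_{S_n\setminus i}$; everything else is bookkeeping on the telescoping sum. \qed
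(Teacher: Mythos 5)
Your proof is correct and follows essentially the same route as the paper: apply Lemma~\ref{lemma:sequential-error} term by term, use $r_n^2 \le s_i^2 \le R_n^2$, telescope, and bound the zero-one loss by the square loss. You merely make explicit a few details the paper leaves implicit (the $S_0$ convention, the nonnegativity of the norm increments needed to multiply through by the bounds on $s_i^2$, and the subspace-inclusion argument giving $s_i \ge s_{S_n\setminus i}$), all of which are handled correctly.
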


\begin{proof}
Using the bounds $r_n^2 \leq s_i^2 \leq R_n^2$ for all $1\leq i\leq n$ and Lemma~\ref{lemma:sequential-error}, we have
$$
r_{n}^2 \big( \| \widehat{f}_{S_{i}}\|_K^2 - \| \widehat{f}_{S_{i-1}}\|_K^2 \big) \leq	\big( y_i - \widehat{f}_{S_{i-1}}(x_i) \big)^2 \leq R_{n}^2 \big( \| \widehat{f}_{S_{i}}\|_K^2 - \| \widehat{f}_{S_{i-1}}\|_K^2 \big)\,.
$$
Adding these inequalities together for $1\leq i\leq n$ proves the inequalities in Equation~\eqref{eq:regret-bound}. Equation~\eqref{eq:regret-bound01} follows because the square loss upper bounds the zero-one loss.
\end{proof}

Note that $R_n^2$ is uniformly bounded over $n$ as long as $\sup_{x \in \cX} K(x, x) < \infty$, since
$R_n^2 = \max_{i} K(x_i, x_i)$. This theorem thus quantifies the difficulty of prediction with a single data-adaptive quantity: $\| \widehat f_{S_n} \|_{K}^2$. In later sections we study precisely how such norm could grow with the sequence length $n$. 

Again, we emphasize that this main result holds for arbitrary data sequence as long as the MNIC is well-defined. This is true, for example, when the empirical kernel matrix has full rank. Such a rank condition holds for any universal kernel, such as the Gaussian kernel.

The above theorem immediately yields the following corollary, a generalization bound for MNIC in the case where the $\{ (x_i, y_i) \}_{i=1}^n$ are sampled $i.i.d.$ from some distribution. 
\begin{theorem}
\label{thm:mnic-generalization}
Suppose that $S_n$ consists of i.i.d. samples from $\mathcal{D}$ and that $y$ denotes a class label in $\{-1,1\}$. Let $(\bx, \by)$ denote a new random draw from the same $\mathcal{D}$. Let $R_n$ be defined as in Theorem~\ref{thm:regret} and let $B_n = \| \widehat f_{S_n} \|_{K}$.
\begin{enumerate}
\item We have
$$
	 \min_{1\leq i \leq n}     \frac{ n  \cdot \Pr[ \by \widehat{f}_{S_{i}}(\bx) < 0]}{\E[R_{n}^2 B_{n}^2]} \leq 1\,.
$$
\item  If either $\E[(1 - \by \widehat{f}_{S_i}(\bx))^2]$ or $\Pr[ \by \widehat{f}_{S_i}(\bx) \leq 0]$ is a non-increasing sequence indexed by $i$, we have
$$
	\Pr[ \by \widehat{f}_{S_n}(\bx) \leq  0] \leq  \frac{\E[R_{n}^2 B_{n}^2 ] }{n}\,.
$$
\item If we denote the Polyak average of $\widehat{f}_{S_i}$ by
$$
	\widetilde{f}_n = \frac{1}{n}\sum_{i=1}^{n} \widehat{f}_{S_i}
$$
then we have
$$
	 \Pr\left[  \by \widetilde{f}_n(\bx) \leq 0 \right] \leq  \frac{\E[R_{n+1}^2 B_{n+1}^2] }{n+1}\,.
$$
\end{enumerate}
\end{theorem}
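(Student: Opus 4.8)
The plan is to derive all three parts from the deterministic regret bound of Theorem~\ref{thm:regret}, by taking expectations and invoking a single elementary feature of i.i.d.\ data: for every $i\ge0$ (with the convention that $\widehat f_{S_0}\equiv0$ interpolates the empty set) the pair $(x_{i+1},y_{i+1})$ is independent of $S_i$ and has the same law as the test pair $(\bx,\by)$, so that for any nonnegative measurable $\psi$,
$$
  \E\big[\psi(\widehat f_{S_i},x_{i+1},y_{i+1})\big]=\E\big[\psi(\widehat f_{S_i},\bx,\by)\big].
$$
Taking $\psi$ to be the zero-one loss, respectively the squared loss, and using $\by^2=1$ to identify $(y-f(x))^2$ with $(1-yf(x))^2$, turns the per-step ``online'' errors in Theorem~\ref{thm:regret} into the population quantities $\Pr[\by\widehat f_{S_i}(\bx)\le0]$ and $\E[(1-\by\widehat f_{S_i}(\bx))^2]$. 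I assume, as in the discussion following Theorem~\ref{thm:regret} (e.g.\ for a universal kernel), that MNIC is a.s.\ well defined along the sequence so that these expectations are legitimate.

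\noindent\emph{Parts 1 and 2.} First I would take expectations in \eqref{eq:regret-bound01} and substitute the identity above, which gives $\sum_{i=0}^{n-1}\Pr[\by\widehat f_{S_i}(\bx)\le0]\le\E[R_n^2B_n^2]$; so the $n$ population error probabilities of $\widehat f_{S_0}\equiv0,\widehat f_{S_1},\dots,\widehat f_{S_{n-1}}$ average to at most $\E[R_n^2B_n^2]/n$. Because each index $1,\dots,n-1$ lies in $\{1,\dots,n\}$ and the $i=0$ term equals $1$, the quantity $\min_{1\le i\le n}\Pr[\by\widehat f_{S_i}(\bx)\le0]$ is below every one of those $n$ summands, hence below their average; weakening ``$\le0$'' to ``$<0$'' and rearranging gives Part~1. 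Part~2 reads the same inequality monotonically: if $i\mapsto\Pr[\by\widehat f_{S_i}(\bx)\le0]$ is non-increasing then $\Pr[\by\widehat f_{S_n}(\bx)\le0]$ is below all $n$ summands, so $n\,\Pr[\by\widehat f_{S_n}(\bx)\le0]\le\E[R_n^2B_n^2]$; and if instead $i\mapsto\E[(1-\by\widehat f_{S_i}(\bx))^2]$ is non-increasing, I would run the identical steps from the squared-loss half of \eqref{eq:regret-bound} and close with the pointwise bound $\mathbbm{1}\{\by f(\bx)\le0\}\le(1-\by f(\bx))^2$ (valid since $\by f(\bx)\le0\Rightarrow1-\by f(\bx)\ge1$), handling the $i=0$ boundary term $\E[(1-\by\widehat f_{S_0}(\bx))^2]=1$ by noting the claimed bound is only informative when $\E[R_n^2B_n^2]\le n$.

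\noindent\emph{Part 3 and the main obstacle.} Here the extra ingredient is convexity, and the subtle point---the one I expect to cost the most thought---is making the denominator come out to $n+1$ rather than $n$. I would use that the classification error is invariant under positive rescaling of the predictor, so $\mathbbm{1}\{\by\widetilde f_n(\bx)\le0\}=\mathbbm{1}\{\by g(\bx)\le0\}$ with $g:=\frac{1}{n+1}\sum_{i=0}^n\widehat f_{S_i}$, the average of the $n+1$ functions $\widehat f_{S_0}\equiv0,\dots,\widehat f_{S_n}$; Jensen's inequality for $t\mapsto t^2$ then gives
$$
  \mathbbm{1}\{\by g(\bx)\le0\}\ \le\ \big(1-\by g(\bx)\big)^2\ \le\ \frac{1}{n+1}\sum_{i=0}^n\big(1-\by\widehat f_{S_i}(\bx)\big)^2 .
$$
Taking expectations, rewriting each term via the i.i.d.\ identity as $\E[(y_{i+1}-\widehat f_{S_i}(x_{i+1}))^2]$, and recognising $\sum_{i=0}^n\E[(y_{i+1}-\widehat f_{S_i}(x_{i+1}))^2]=\sum_{j=1}^{n+1}\E[(y_j-\widehat f_{S_{j-1}}(x_j))^2]$ as the telescoping left-to-right regret of the length-$(n+1)$ run, the upper bound in \eqref{eq:regret-bound} (applied with $n+1$ points) bounds this sum by $\E[R_{n+1}^2B_{n+1}^2]$, and dividing by $n+1$ finishes. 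The crux is thus the bookkeeping just described: not using the $\tfrac1n$-normalized $\widetilde f_n$ directly, but passing via scale-invariance to the $(n+1)$-fold average that includes the trivial predictor $\widehat f_{S_0}\equiv0$, so that the Jensen bound and the telescoping regret match term for term; everything else is substitution and the elementary averaging observations of Parts~1 and~2.
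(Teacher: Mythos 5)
Your proposal is correct and follows essentially the same route as the paper: take expectations of the regret bound using the i.i.d./independence structure, dominate the zero-one loss by the squared loss (Markov), handle Parts 1--2 by the min/monotonicity of the $n$ summands (with $\widehat f_{S_0}\equiv 0$ contributing the trivial term), and prove Part 3 by Jensen on the $(n+1)$-fold average together with sign scale-invariance, which is exactly the paper's ``rescale by $\tfrac{n+1}{n}$'' step made explicit.
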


\begin{proof}
Taking expected values in Equation~\eqref{eq:regret-bound}, we can drop the subscripts on $(x_i, y_i)$ as they are identically distributed to $(\bx, \by) \sim \cD$ and is independent of $S_{i-1}$. This yields the bound
\begin{equation}\label{eq:online}
	\sum_{i=1}^{n} \E[(1 - \by \widehat{f}_{S_{i-1}}(\bx))^2]  = \sum_{i=1}^{n} \E[(1 - y_i \widehat{f}_{S_{i-1}}(x_i))^2] \leq \E[R_{n}^2 B_{n}^2]\,.
\end{equation}
Inequality~\eqref{eq:online} immediately proves
$$
	\min_{0 \leq i\leq n-1} \E[(1 - \by \widehat{f}_{S_i}(\bx))^2] \leq  \frac{\E[R_{n}^2 B_{n}^2]}{n} \,.
$$
Markov's inequality implies that
$$
	\Pr[ \by\widehat{f}_{S_i}(\bx) \leq 0]	\leq \E[(1 - \by \widehat{f}_{S_i}(\bx))^2]\,.
$$
It is clear that $\min_{1\leq i\leq n}\Pr[ \by \widehat{f}_{S_{i}}(\bx) \leq 0] \leq  \min_{1\leq i\leq n-1}\Pr[ \by \widehat{f}_{S_{i}}(\bx) \leq 0]$. Also, the second minimum value stays the same over $0\leq i\leq n-1$ as $\widehat{f}_{S_0} = 0$. Therefore the first part of the theorem is proved.
Similarly, assuming the sequence is non-increasing means that the minimum summand of~\eqref{eq:online} is $\E[(1 - \by \widehat{f}_{n}(\bx))^2]$ (or respectively $\Pr[ \by \widehat{f}_{S_i}(\bx) \leq 0]$). This proves the second part of the theorem. The third part of the theorem follows by applying Jensen's inequality to lower bound the left-hand-side of inequality~\eqref{eq:online} (with $n+1$ substituting $n$), applying Markov's inequality, and then rescaling by and then rescaling by $\frac{n+1}{n}$. 
\end{proof}

Finally, note that we could have derived a result similar to Theorem~\ref{thm:mnic-generalization} for \emph{regression}, replacing probability of error with expected squared loss. For example, the same argument would show
$$
	 \min_{1\leq i \leq n}  \E\left[(\by- \widehat{f}_{S_i}(\bx))^2\right] \leq \frac{\E[R_{n}^2 B_{n}^2]}{n}\,.
$$
However, we note that the norm bound $B_n$ may grow rapidly for such regression problems. Whereas we will see several examples in the sequel where the norm bound $B_n$ grows slowly with $n$ for classification, we leave study of conditions under which these norms grow sublinearly in $n$ for regression to future work.

\section{Proofs of Lemma~\ref{lemma:sequential-error}: Algorithmic and Geometric Perspectives}

Lemma~\ref{lemma:sequential-error} is the heart of our analysis and has several simple proofs. We present these varied views as they provide many useful algebraic, geometric, and algorithmic insights into least-squares classification and interpolation.

\paragraph{Algebraic proof.} The algebraic proof of Lemma~\ref{lemma:sequential-error} follows from formulae for matrix inverses. Let $K$ denote the kernel matrix for $S_{i}$. Partition $K$ as
$$
	K = \begin{bmatrix} K_{11} & K_{12} \\ K_{21} & K_{22} \end{bmatrix}
$$
where $K_{11}$ is $(i-1) \times (i-1)$  and $K_{22}$ is a scalar equal to $K(x_i,x_i)$. Note that under this partitioning,
$$
\widehat{f}_{S_{i-1}}(x_i) = K_{21} K_{11}^{-1} y_{1:i-1}\, ,
$$
where $y_{1:i-1}$ slices all but the last element of $y$.

First, note that 
$$
s_i^2 = K_{22}-K_{21} K_{11}^{-1} K_{12}\,.
$$
Next, using the formula for inverting partitioned matrices, we have that 
$$
	K^{-1} = \begin{bmatrix} 
	(K_{11}-K_{12}K_{22}^{-1} K_{21})^{-1} & s_i^{-2} K_{11}^{-1} K_{12} \\
	s_i^{-2} (K_{11}^{-1} K_{12})^\top & s_i^{-2}
	\end{bmatrix}\,.
$$
By the Woodbury formula we have
$$
	(K_{11}-K_{12}K_{22}^{-1} K_{21})^{-1}
	= K_{11}^{-1} +s_i^{-2} \left(K_{21} K_{11}^{-1}\right)^\top \left(K_{21} K_{11}^{-1}\right) \,.
$$
Hence,
$$
	\|\widehat{f}_{S_i}\|_K^2 = y^\top K^{-1} y = s_i^{-2}(y_i - 2y_i \widehat{f}_{S_{i-1}}(x_i) + \widehat{f}_{S_{i-1}}^2(x_i) ) + y_{1:i-1}^\top K_{11}^{-1} y_{1:i-1}\,.
$$
Rearranging terms  proves the lemma.

\paragraph{Geometric Proof.} We can sketch a geometric proof Lemma~\ref{lemma:sequential-error} which gives more light into the sequential nature of our regret bound. This proof naturally gives rise to an online algorithm for solving MNIC, demonstrating that at each step we only need to increment our function in a direction orthogonal to all of the previously seen examples.

\begin{figure}[h]
	\centering
	\includegraphics[width=0.7\linewidth]{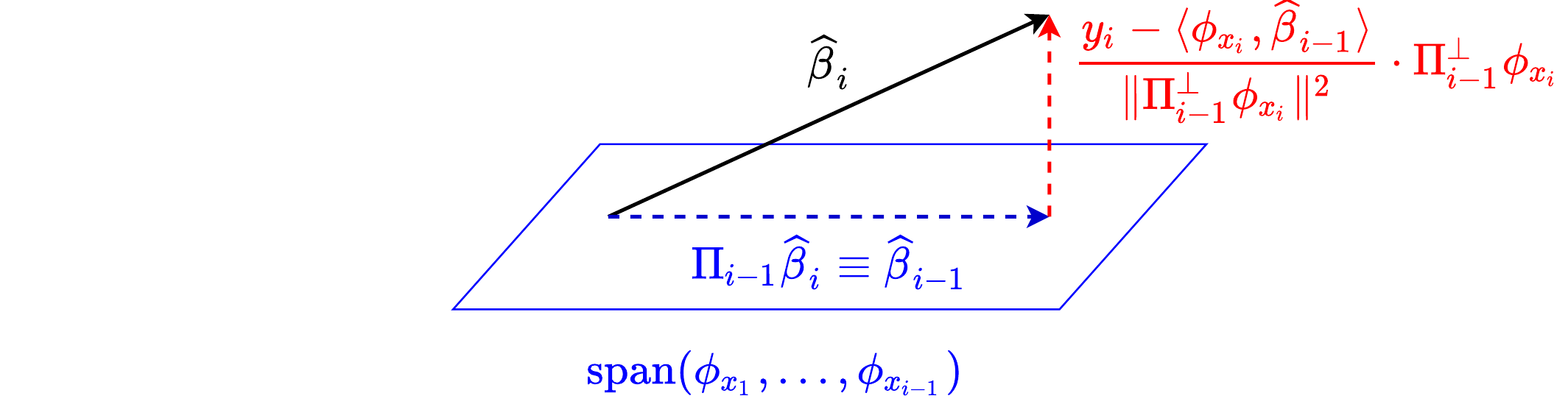}
	\caption{Illustration of the geometric proof.}
\end{figure}
	
Recall that $\widehat{f}_{S_i}(x)$ is a linear function in $\phi_x$
\begin{align}
	\widehat{f}_{S_i}(x) = \langle \phi_x, \widehat{\beta}_{i} \rangle_{K} \enspace,
\end{align}
where $\widehat{\beta}_{i}$ lies in the $\spn(\phi_{x_1}, \ldots, \phi_{x_{i}})$. Let us compare $\widehat \beta_i$ and $\widehat \beta_{i-1}$. Let $\Pi_{i-1}$ denote the orthogonal projection onto $\spn(\phi_{x_1}, \ldots, \phi_{x_{i-1}})$. Observe that one must have
\begin{align}
	\Pi_{i-1} \widehat\beta_{i} = \widehat\beta_{i-1} \enspace.
\end{align}
This is because the function $g_1(x) := \langle \phi_x, \Pi_{i-1} \widehat\beta_{i}\rangle_K$ has to interpolate the data set $S_{i-1}$ as $$g_1(x_j) \equiv \widehat{f}_{S_i}(x_j)$$ for $1\leq j \leq i-1$. On the span of $\Pi_{i-1}$, there is a unique interpolating solution to $S_{i-1}$, namely $\widehat\beta_{i-1}$, thus proving the observation. Therefore, since $\widehat \beta_i$ lies in the span of $\{\phi_{x_1}, \ldots, \phi_{x_{i-1}} \} \cup \{ \phi_{x_{i}} \}$, 
\begin{align}
	\widehat \beta_i &= \widehat \beta_{i-1} + \Pi_{i-1}^{\perp} \widehat \beta_i  \\
		&=  \widehat \beta_{i-1} + c \cdot \Pi_{i-1}^{\perp} \phi_{x_i}
\end{align}
with some constant $c$. To compute the value of $c$, let us apply the function to the data point $(x_i, y_i)$
\begin{align}
		y_i = \widehat{f}_{S_i}(x_i) = \langle \phi_{x_i}, \widehat{\beta}_{i} \rangle_{K} &= \langle \phi_{x_i}, \widehat{\beta}_{i-1} \rangle_{K} + c \cdot \langle \phi_{x_i},  \Pi_{i-1}^{\perp} \phi_{x_i} \rangle_{K}\\
		&=\widehat{f}_{S_{i-1}}(x_i) + c \cdot s_i^2 \enspace.
\end{align}
Thus, we have proven
\begin{align}
		\widehat \beta_i = \widehat \beta_{i-1} + \frac{y_i - \widehat{f}_{S_{i-1}}(x_i)}{s_i^2} \cdot \Pi_{i-1}^{\perp} \phi_{x_i} \enspace.
	\end{align}
Evaluating the norm on both sides of the equation will conclude the proof.

\paragraph{An Online Interpolation Algorithm.}

The geometric proof of Lemma~\ref{lemma:sequential-error} describes the following online algorithm for MNIC:

\iftoggle{arxiv}{\begin{algorithm}}{\begin{algorithm2e}}
 \KwData{A sequence of data $\{z_i\}_{i=1}^{n}$. A feature embedding $\phi:\Reals^d \rightarrow \Reals^p$ that corresponds to an RKHS.}
 \KwResult{A min-norm interpolation function $\widehat{f}_{S_{i-1}}: \cX \rightarrow \cY$ on all past data $S_{i-1}$ at each time stamp $i$.}
 Initialization: Set $\widehat \beta_0 = \mathbf{0} \in \Reals^p$ and $\widehat f_{S_0} = 0$, and $i=0$ \;
 \While{$i < n$}{
   Set $i = i+1$ and receive a new data $z_i = (x_i, y_i)$ \;
   Make prediction based on the previous interpolator $\widehat f_{S_{i-1}}$ and record the error $\epsilon_i = y_i - f_{S_{i-1}}(x_i)$. 
   Update the vector $\widehat \beta_i$
   $$
   \widehat \beta_i = \widehat \beta_{i-1} + \frac{\epsilon_i}{\| \Pi_{i-1}^{\perp} \phi_{x_i} \|^2} \cdot \Pi_{i-1}^{\perp} \phi_{x_i} \enspace,
   $$
   and the corresponding min-norm interpolation function $\widehat{f}_{S_{i}}(x) = \langle \phi_x,  \widehat \beta_i \rangle$ \;
 }
 \caption{Online Minimum-Norm Interpolation Algorithm.}\label{alg:online-alg}
\iftoggle{arxiv}{\end{algorithm}}{\end{algorithm2e}}
\medskip

Note that the function update can be written equivalently in the kernel form: define $g_i(x)$
\begin{align}
	g_i(x) = K(x, x_i) -  K(x, X_{i-1}) [K(X_{i-1}, X_{i-1}) ]^{-1} K(X_{i-1}, x_i) 
\end{align}
with $X_{i-1}$ concatenating $\{x_1, \ldots, x_{i-1}\}$, then
\begin{align}
	\widehat{f}_{S_{i}}(x) = \widehat{f}_{S_{i-1}}(x) + \frac{\epsilon_i}{g_i(x_i)}  g_i(x) \enspace.
\end{align}

Theorem~\ref{thm:regret} can be interpreted as deriving a regret bound for this online algorithm.

\paragraph{Connections to the QR decomposition.} The online algorithm defined by Algorithm~\ref{alg:online-alg} turns out to be equivalent to solving MNIC with a QR factorization. Recall that the QR factorization of a matrix $A$ writes $A=QR$ where $Q$ has orthonormal columns and $R$ is upper triangular. If $A$ is a data matrix with each column a data point, then $Q$ is an orthonormal basis for the span of the data, and hence we can compute the projection matrices $\Pi$ directly from $Q$. We formalize this observation in the following Proposition.

\begin{proposition}\label{prop:qr}
	Suppose $\phi$ maps into a $p$ dimensional space and $K(x, x') = \langle \phi_x, \phi_{x'}\rangle$ with the inner product in $\Reals^p$. Define the $p\times n$ dimensional matrix $\Phi$ to have $i$th column equal to $\phi_{x_i}$. Let $\Phi = QR$ be a QR decomposition of $\Phi$ and denote the $i$th column of $Q$ by $q_i$. Let $z=R^{-\top} y$. Then $\widehat \beta_k = \sum_{i=1}^k q_{i} z_i$.
\end{proposition}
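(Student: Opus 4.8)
The plan is to verify that the partial sums $\widehat\beta_k = \sum_{i=1}^k q_i z_i$ agree with the vectors produced by Algorithm~\ref{alg:online-alg} (equivalently, with the minimum-norm interpolants $\widehat\beta_k$ of Lemma~\ref{lemma:sequential-error}), by an induction on $k$. First I would record the two facts about the QR factorization $\Phi = QR$ that drive everything: (i) since $R$ is upper triangular, the first $k$ columns of $\Phi$, namely $\phi_{x_1},\dots,\phi_{x_k}$, all lie in $\spn(q_1,\dots,q_k)$, and conversely $q_1,\dots,q_k$ is an orthonormal basis for $\spn(\phi_{x_1},\dots,\phi_{x_k})$; and (ii) the entries of $R$ satisfy $\phi_{x_j} = \sum_{i=1}^{j} R_{ij} q_i$, so $\langle q_i, \phi_{x_j}\rangle = R_{ij}$ for $i \le j$ and $=0$ for $i > j$. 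Consequently $\Pi_{k-1}$, the orthogonal projection onto $\spn(\phi_{x_1},\dots,\phi_{x_{k-1}})$, equals $\sum_{i=1}^{k-1} q_i q_i^\top$, and $\Pi_{k-1}^\perp \phi_{x_k} = \phi_{x_k} - \sum_{i=1}^{k-1} R_{ik} q_i = R_{kk}\, q_k$, a scalar multiple of $q_k$ alone.

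Next I would check the two defining properties of the minimum-norm interpolant $\widehat\beta_k$ for the candidate $\beta := \sum_{i=1}^k q_i z_i$ with $z = R^{-\top} y$. For the norm-minimality, note $\beta \in \spn(q_1,\dots,q_k) = \spn(\phi_{x_1},\dots,\phi_{x_k})$, which is exactly the subspace in which the minimum-norm interpolant lives, so it suffices to show $\beta$ interpolates $S_k$. For interpolation, compute $\langle \phi_{x_j}, \beta\rangle = \sum_{i=1}^k z_i \langle \phi_{x_j}, q_i\rangle = \sum_{i=1}^k z_i R_{ij} = (R^\top z)_j = y_j$ for every $j \le k$, using property (ii) and $z = R^{-\top} y$, i.e. $R^\top z = y$. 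Hence $\beta = \widehat\beta_k$. (One should note this argument also reproves, in the finite-dimensional setting, that the minimum-norm interpolant is the unique interpolating vector in $\spn(\phi_{x_1},\dots,\phi_{x_k})$, a fact already used in the geometric proof of Lemma~\ref{lemma:sequential-error}.)

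Alternatively — and this is the version I would actually write down since it ties the Proposition to Algorithm~\ref{alg:online-alg} rather than just to the optimization problem — I would do the induction directly against the update rule. The base case $k=0$ is $\widehat\beta_0 = 0$, an empty sum. For the inductive step, assume $\widehat\beta_{k-1} = \sum_{i=1}^{k-1} q_i z_i$. By the algorithm, $\widehat\beta_k = \widehat\beta_{k-1} + \dfrac{\epsilon_k}{\|\Pi_{k-1}^\perp \phi_{x_k}\|^2}\,\Pi_{k-1}^\perp \phi_{x_k}$, and by the computation above $\Pi_{k-1}^\perp \phi_{x_k} = R_{kk} q_k$, so the increment is a multiple of $q_k$; it remains only to identify that multiple as $z_k$. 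Writing out $\epsilon_k = y_k - \widehat f_{S_{k-1}}(x_k) = y_k - \langle \phi_{x_k}, \widehat\beta_{k-1}\rangle = y_k - \sum_{i=1}^{k-1} z_i R_{ik}$, and $\|\Pi_{k-1}^\perp\phi_{x_k}\|^2 = \langle \phi_{x_k}, R_{kk} q_k\rangle = R_{kk}^2$, the coefficient of $q_k$ becomes $\dfrac{(y_k - \sum_{i=1}^{k-1} z_i R_{ik})\,R_{kk}}{R_{kk}^2} = \dfrac{y_k - \sum_{i<k} R_{ik} z_i}{R_{kk}}$, which is exactly the $k$th component of the forward-substitution solution of $R^\top z = y$, i.e. $z_k$. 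This closes the induction.

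The only real obstacle is bookkeeping rather than conceptual: one must be careful about the degenerate case $R_{kk}=0$, which happens precisely when $\phi_{x_k}$ lies in the span of the earlier features, i.e. when $s_k = 0$ and MNIC is not well defined at step $k$; under the Proposition's standing hypothesis that MNIC exists (equivalently $\Phi$ has full column rank, so $R$ is invertible with nonzero diagonal) this does not arise, and both $R^{-\top}$ and the division by $\|\Pi_{k-1}^\perp\phi_{x_k}\|^2$ are legitimate. A secondary point of care is the orientation of the triangular system — it is $R^\top z = y$ with $R^\top$ \emph{lower} triangular, solved top-to-bottom by forward substitution — so that $z_k$ depends only on $y_1,\dots,y_k$ and $R_{1k},\dots,R_{kk}$, matching the causal structure of the online algorithm.
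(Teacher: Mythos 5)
Your proposal is correct. The induction you say you would actually write down is essentially the paper's own proof of Proposition~\ref{prop:qr}: both rest on $\phi_{x_k}=\sum_{i\le k}R_{ik}q_i$, hence $\Pi_{k-1}^{\perp}\phi_{x_k}=R_{kk}q_k$ and $s_k=R_{kk}$, and both identify the coefficient of $q_k$ in the update of Algorithm~\ref{alg:online-alg} with the forward-substitution formula $z_k=(y_k-\sum_{i<k}R_{ik}z_i)/R_{kk}$ for $R^{\top}z=y$. Your first, non-inductive variant is a genuinely different (and slightly shorter) route: you verify directly that $\beta=\sum_{i\le k}q_iz_i$ interpolates $S_k$ because $\langle\phi_{x_j},\beta\rangle=(R^{\top}z)_j=y_j$, and then invoke membership in $\spn(\phi_{x_1},\dots,\phi_{x_k})$ together with uniqueness of the interpolant within that span to conclude $\beta=\widehat\beta_k$. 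What this buys is independence from the online update rule (it ties the formula to the optimization problem~\eqref{eq:rkhs-interpolate} itself), at the cost of leaning on the uniqueness-within-the-span fact, which the paper establishes inside the geometric proof of Lemma~\ref{lemma:sequential-error}; the paper's induction instead produces, as a byproduct, the identification $z_k=(y_k-\widehat f_{S_{k-1}}(x_k))/s_k$ that links the QR data to the online residuals. Your handling of the degenerate case $R_{kk}=0$ matches the paper's implicit standing assumption that MNIC is well defined, so there is no gap.
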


We defer the proof of this proposition to Appendix~\ref{sec:qr-proof}. Proposition~\ref{prop:qr} also allows us to compute the average of $\widehat{\beta}_i$ from the QR decomposition. Let $D$ denote the diagonal matrix with $k$th diagonal entry $1-\tfrac{k-1}{n}$. Then Proposition~\ref{prop:qr}{ immediately implies that 
\begin{equation}
	\frac{1}{n} \sum_{i=1}^n \widehat{\beta}_i = Q D R^{-\top} y\,.
\end{equation}
This implies that computing the average of $\widehat\beta$ merely requires only $n$ floating point operations more than solving for the least-norm solution of $\Phi^\top \beta =y$.

We can also kernelize the computation of the average. With slight abuse of notation, we can still let $\Phi$ denote the semi-infinite dimensional data matrix consisting of concatenation of $\phi_{x_i}$. This $\Phi$ will have a QR decomposition.  The kernel matrix $K = \Phi^\top \Phi$ will have Cholesky decomposition $R^\top R$ where $R$ is the same matrix as in the QR decomposition of $\Phi$. Our goal is to write $\widehat\beta_k$ as $\sum_{i=1}^n c_i \phi_{x_i}$. Let $P_k$ denote the diagonal matrix that projects onto the first $k$ standard coordinates in $\R^n$. Then we have $\widehat\beta_k = Q P_k z$ and
$$
	Kc = \Phi^\top Q P_k z = R^\top P_k z\,.
$$
Hence, $\widehat\beta_k = \Phi c$ with $c=R^{-1} P_k R^{-\top} y$ and $\frac{1}{n} \sum_{i=1}^n \widehat\beta_k = \Phi \bar{c}$ with $\bar{c} = R^{-1} D R^{-\top} y.$

\section{Generalization to Ridge Regression}
\label{sec:ridge}

By replacing the matrix $K$ with $K+\lambda I$ in Lemma~\ref{lemma:sequential-error}, all of the results derived so far immediately generalize to kernel ridge regression and RLSC. Recall that ridge regression solves the optimization problem
$$
	\sum_{i=1}^n (f(x_i) - y_i)^2 + \lambda \|f\|^2_K\,.
$$
The optimal solution of the problem is
$$
	\widehat{f}_{S_n,\lambda} = \sum_{i=1}^n \alpha_i K(x_i, x)\,,~\text{where}~\alpha = (K+\lambda I)^{-1} y\,.
$$
Thus, the only thing distinguishing the solution of the minimum norm interpolation problem~\eqref{eq:rkhs-interpolate} is the addition of $\lambda I$. Hence, Lemma~\ref{lemma:sequential-error} holds, with slightly different complexity measures. We need to replace the norm of $\widehat{f}_{S_n,\lambda}$ with the complexity measure
$$
	B_{n, \lambda}^2 :=y^\top (K+\lambda I)^{-1} y\,.
$$
This complexity measure upper bounds the  norm of the ridge regression solution. To see this, observe
$
y^T(K+\lambda I)^{-1} y \geq y^T(K+\lambda I)^{-1} K (K+\lambda I)^{-1} y = \alpha^T K \alpha = \|\widehat {f}_{S_n,\lambda}\|_K^2 \,.
$
We additionally need to replace $s_i^2$ with
$
	s^2_{i, \lambda} := \min_{c \in \Reals^{i-1}} \,\, \big\| \phi_{x_i} - \sum_{j=1}^{i-1} c_j  \phi_{x_j} \big\|_{K}^2 + \lambda \|c\|_2^2 + \lambda\,.
$
With these substitutions, analogs of Theorems~\ref{thm:regret} and Theorem~\ref{thm:mnic-generalization} follow without modification of the proofs.

\section{Some Plausible Scenarios for Slow Norm Growth}
\label{sec:norm-growth}

We now turn to understanding the magnitude of the norm $\| \widehat{f}_{S_n} \|_K^2$ under different generative models. Provided there is some separation between the probability distributions of the two classes, we expect the norm to grow slowly as a function of the sample size. As we shall see, we need not assume that the data realizations are well separated or that there is no noise on the labels. Rather, we will only need that $p(x|y=1)$ and $p(x|y=-1)$ are sufficiently distinct at the population level.

We begin with a simple over-parametrized Gaussian mixture model in Section~\ref{sec:linear-kernel}. In this case, the interpolant's norm stays bounded almost surely, implying an $O(1)$ regret with probability one. This example is further extended to more general mixture models in Section~\ref{sec:general-mixture}. This concerns a much broader range of scenarios where a $o(n)$ regret (sublinear) is possible. In Section~\ref{sec:nonlinear-kernel}, we study general non-linear kernels with considerably weaker distributional assumptions. There, we relate the magnitude of the interpolant's norm to the Total Variation distance between the class conditional distributions. We defer all proofs to the Appendix.

\subsection{Over-parametrized Gaussian Mixture Model}
\label{sec:linear-kernel}
	The first case we consider is a linear kernel with $\phi_x = x \in \Reals^d$  and $K(x, x') = \langle x, x' \rangle$. Assume 
	\begin{align}\label{eq:gen1}
		x_i =  y_i  \cdot \mu  \cdot \theta_\star + \epsilon_i 
	\end{align}
where $\mu$ parametrizes the strength of the signal, $\theta_\star$ the direction with $\| \theta_\star \|_2 = 1$, and $\epsilon_i \sim \cN(0, 1/d \cdot I_d)$ the noise (independent of $y_i$). We consider the over-parametrized regime with $d(n)/n = \psi > 1$.

\begin{lemma}
	\label{lem:GMM_linear}
	For i.i.d. data generated from the model defined by~\eqref{eq:gen1}, $R_n^2 \leq (\mu + 1)^2$ a.s. and
	\begin{align}
		\limsup_{n\rightarrow \infty} ~~\| \widehat{f}_{S_n} \|_K^2 \leq \frac{\psi}{(\psi-1)\mu^2},~~ \text{a.s.}
	\end{align}
\end{lemma}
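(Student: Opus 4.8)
The plan is to compute $\|\widehat f_{S_n}\|_K^2 = y^\top K^{-1} y$ directly, using the structure of the generative model~\eqref{eq:gen1}. Write the data matrix as $\Phi = [x_1,\dots,x_n] \in \R^{d\times n}$, so that by~\eqref{eq:gen1} we have $\Phi = \mu\,\theta_\star y^\top + E$ where $E = [\epsilon_1,\dots,\epsilon_n]$ has i.i.d.\ $\cN(0,1/d)$ entries and $y \in \{\pm 1\}^n$. The kernel matrix is $K = \Phi^\top\Phi$, which has full rank almost surely in the overparametrized regime $d > n$, so $\widehat f_{S_n}$ is well defined and $\|\widehat f_{S_n}\|_K^2 = y^\top (\Phi^\top\Phi)^{-1} y$. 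The bound $R_n^2 \le (\mu+1)^2$ a.s.\ follows from $\|\phi_{x_i}\|_2 = \|x_i\|_2 \le \mu\|\theta_\star\|_2 + \|\epsilon_i\|_2$ together with $\|\epsilon_i\|_2^2 \to 1$ a.s.\ (strong law of large numbers on $d$ coordinates), so $\limsup_n R_n^2 \le (\mu+1)^2$; a slightly more careful argument handles the non-asymptotic statement, or one simply reads it as the almost-sure limsup.

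The main work is the upper bound on $y^\top(\Phi^\top\Phi)^{-1} y$. The key observation is that $\Phi y = \mu\,\theta_\star \|y\|_2^2 + Ey = \mu n\,\theta_\star + Ey$, and more usefully, one can lower-bound $\Phi^\top\Phi \succeq$ something that controls the quadratic form along $y$. Concretely, since $\Phi^\top \Phi = (\mu y\theta_\star^\top + E^\top)(\mu\theta_\star y^\top + E) = \mu^2 \|\theta_\star\|_2^2\, yy^\top + \mu\, y\theta_\star^\top E + \mu\, E^\top\theta_\star y^\top + E^\top E$, I would use the variational characterization
\[
  y^\top(\Phi^\top\Phi)^{-1}y = \Big(\min_{v:\,v^\top y = 1}\ v^\top \Phi^\top\Phi\, v\Big)^{-1} = \Big(\min_{v:\,v^\top y = 1}\ \|\Phi v\|_2^2\Big)^{-1}.
\]
For any $v$ with $v^\top y = 1$, $\Phi v = \mu\,\theta_\star + E v$, and choosing $v$ judiciously (e.g.\ $v = y/n$, or optimizing) gives $\|\Phi v\|_2^2 \ge$ roughly $\mu^2 - (\text{cross term}) + \|Ev\|_2^2$. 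Since we are taking a \emph{minimum} over $v$ on the left to get an \emph{upper} bound on the norm, I actually need the reverse: a lower bound on $\min_v \|\Phi v\|_2^2$. The cleanest route is to note $\|\Phi v\|_2^2 \ge \sigma_{\min}(\Phi)^2 \|v\|_2^2 \ge \sigma_{\min}(\Phi)^2 / n$ (since $v^\top y = 1$ forces $\|v\|_2 \ge 1/\sqrt n$), which would give $\|\widehat f_{S_n}\|_K^2 \le n/\sigma_{\min}(\Phi)^2$. But $\sigma_{\min}(\Phi)^2$ only scales like a constant (from the $E$ part, $\sigma_{\min}(E)^2 \to (\sqrt\psi - 1)^2/\psi \cdot$ something by Bai–Yin / Marchenko–Pastur), yielding a bound of order $n$ — too weak. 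The refinement needed is to exploit that $y$ is (mostly) aligned with the rank-one signal direction: decompose $v = \alpha y/\|y\|_2^2 + w$ with $w \perp y$ is not quite orthogonality in the right inner product, so instead split $\R^d = \spn(\theta_\star)\oplus \theta_\star^\perp$ and write $E = \theta_\star(\theta_\star^\top E) + E_\perp$ where $E_\perp$ acts on $\theta_\star^\perp$. Then $\Phi v = \theta_\star(\mu + \theta_\star^\top E v) + E_\perp v$, and $\|\Phi v\|_2^2 = (\mu + \theta_\star^\top E v)^2 + \|E_\perp v\|_2^2$. Minimizing over $v$ with $v^\top y = 1$: if $\mu$ is large, the first term dominates and forces $\theta_\star^\top E v \approx -\mu$, which (since $\theta_\star^\top E$ is a single random Gaussian row vector with norm $\approx 1$) costs $\|v\|_2 \gtrsim \mu$, and then $\|E_\perp v\|_2^2 \gtrsim \sigma_{\min}(E_\perp)^2 \mu^2$, but this still does not directly give the claimed $(\psi-1)\mu^2/\psi$ form. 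The right computation is an explicit rank-one Sherman–Morrison-type evaluation.

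So the cleanest plan, and the one I would actually carry out: apply the matrix inversion identity to $K = \Phi^\top\Phi$ with the rank-one-plus-noise structure. Write $K = E^\top E + \mu(E^\top\theta_\star y^\top + y\theta_\star^\top E) + \mu^2 yy^\top$. Condition on $E_\perp$ (equivalently, condition on the component of $E$ orthogonal to $\theta_\star$), treat $g := E^\top \theta_\star \in \R^n$ as a fresh $\cN(0, \frac1d I_n)$ vector independent of $E_\perp$, and observe $E^\top E = E_\perp^\top E_\perp + gg^\top$. Thus $K = M + (g + \mu y)(g+\mu y)^\top$ where $M := E_\perp^\top E_\perp$, which is the Gram matrix of $n$ vectors living in the $(d-1)$-dimensional space $\theta_\star^\perp$; since $d - 1 \ge n$ a.s., $M$ is invertible. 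By Sherman–Morrison,
\[
  y^\top K^{-1} y = y^\top M^{-1} y - \frac{\big(y^\top M^{-1}(g+\mu y)\big)^2}{1 + (g+\mu y)^\top M^{-1}(g+\mu y)}.
\]
As $n\to\infty$ with $d/n\to\psi$, standard Marchenko–Pastur / random-matrix concentration gives $\tfrac1n y^\top M^{-1} y \to \tfrac1{\psi}\cdot\tfrac1{1 - 1/\psi} = \tfrac{1}{\psi - 1}$ (the Stieltjes transform of the Marchenko–Pastur law at $0$, scaled correctly), $\tfrac1n (g+\mu y)^\top M^{-1}(g+\mu y) \to (1 + \mu^2)\cdot\tfrac{1}{\psi-1}$ using independence of $g$ from $M$ and $\E\|g\|^2/n \to 1/\psi \cdot$(appropriate constant), wait — one must track the normalization $\epsilon_i \sim \cN(0,I_d/d)$ so entries of $E$ have variance $1/d$ and $M$'s spectrum follows the appropriately scaled law; and $\tfrac1n y^\top M^{-1}(g + \mu y) \to \mu \cdot \tfrac1{\psi-1}$ (the $g$ contribution vanishes by independence and mean zero). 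Plugging in, the limit is
\[
  n\Big(\tfrac{1}{\psi-1} - \tfrac{\mu^2/(\psi-1)^2}{(1+\mu^2)/(\psi-1)}\Big)\cdot\tfrac1n = \tfrac{1}{\psi-1} - \tfrac{\mu^2}{(\psi-1)(1+\mu^2)} = \tfrac{1}{(\psi-1)(1+\mu^2)},
\]
hmm, this gives $\frac{1}{(\psi-1)(1+\mu^2)}$, not $\frac{\psi}{(\psi-1)\mu^2}$ — the discrepancy means I have the normalization of the Marchenko–Pastur constant or the role of $\psi$ versus $d$ slightly off, and reconciling these constants (in particular whether the relevant aspect ratio entering $M$ is $(d-1)/n \approx \psi$ and what the $1/d$ variance scaling does to the limiting quadratic form) is exactly the bookkeeping I would need to do carefully; this constant-chasing is the main obstacle, but it is purely mechanical once the Sherman–Morrison reduction above is in place. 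The qualitative structure — a rank-one update splitting off the signal, an invertible noise Gram matrix on the complementary subspace, and Marchenko–Pastur for the quadratic forms — is what drives the $1/(\psi-1)$-type behavior and the $1/\mu^2$ suppression from the signal strength.

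Summary of steps, in order: (1) set up $\Phi = \mu\theta_\star y^\top + E$, verify $K$ invertible a.s.\ for $d>n$; (2) bound $R_n^2$ via SLLN on $\|\epsilon_i\|_2^2$; (3) decompose $E$ along $\theta_\star$ and its orthogonal complement to write $K = M + (g+\mu y)(g+\mu y)^\top$ with $M$ the noise Gram matrix on $\theta_\star^\perp$, invertible a.s.; (4) apply Sherman–Morrison to $y^\top K^{-1} y$; (5) invoke Marchenko–Pastur and concentration to evaluate the three limiting quadratic forms $y^\top M^{-1}y$, $y^\top M^{-1}(g+\mu y)$, $(g+\mu y)^\top M^{-1}(g+\mu y)$, carefully tracking the $1/d$ variance normalization and the aspect ratio $(d-1)/n \to \psi$; (6) combine to get the claimed $\limsup$. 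The hard part is step (5)'s constant bookkeeping; everything else is routine.
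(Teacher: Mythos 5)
Your skeleton is exactly the paper's: splitting $E$ along $\theta_\star$ and its orthogonal complement gives $K=M+(g+\mu y)(g+\mu y)^\top$ with $M$ independent of $g$, and Sherman--Morrison reduces everything to three quadratic forms in $M^{-1}$. (The paper arrives at the same place via rotational invariance, taking $\theta_\star=e_1$ and writing $Y_n\circ X_n=[\mu\mathbf{1}+g,\,Z]$; your $M$ is their $ZZ^\top$ and your $g+\mu y$ is their $\mu\mathbf{1}+g$.) The problem is that the part you defer as ``purely mechanical constant-chasing'' is where the proof actually lives, and your plan for it would fail. First, the normalizations: with $\epsilon_i\sim\cN(0,I_d/d)$ you have $\|g\|^2\approx n/d\to 1/\psi$, a constant, so $g$ contributes only $O(1)$ to $(g+\mu y)^\top M^{-1}(g+\mu y)$; your limit $(1+\mu^2)/(\psi-1)$ implicitly treats $\|g\|^2\approx n$. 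Likewise $\tfrac1n y^\top M^{-1}y\to \psi/(\psi-1)$, not $1/(\psi-1)$: by rotational invariance $y^\top M^{-1}y\stackrel{dist.}{=} n\cdot d/\chi^2_{d-n}$, which is exactly how the paper evaluates these forms ($\chi^2$ concentration plus Borel--Cantelli also supplies the almost-sure statement, which in-probability Marchenko--Pastur asymptotics alone do not).

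Second, and more importantly, ``evaluate the three normalized quadratic forms and plug into Sherman--Morrison'' cannot give the claimed $O(1)$ bound even with corrected constants: in the form $y^\top M^{-1}y-\bigl(y^\top M^{-1}u\bigr)^2/\bigl(1+u^\top M^{-1}u\bigr)$ with $u=g+\mu y$, you are subtracting two $\Theta(n)$ quantities whose difference is $O(1)$, so knowing each to relative precision $1+o(1)$ only controls the difference to $o(n)$. (Naively plugging in the corrected limits yields $1/\mu^2$, which happens to lie below the stated bound but is not justified by those estimates.) The paper sidesteps this by rearranging exactly before estimating: with $a=y^\top M^{-1}y$, $b=y^\top M^{-1}g$, $c=g^\top M^{-1}g$, the $\mu^2$- and $\mu$-terms cancel identically and
\begin{align*}
y^\top K^{-1}y \;=\; \frac{a(1+c)-b^2}{1+(g+\mu y)^\top M^{-1}(g+\mu y)}\,,
\end{align*}
so all $\mu$-dependence sits in the denominator; then $a\approx n\tfrac{\psi}{\psi-1}$, $c\approx\tfrac{1}{\psi-1}$, $b^2\ge 0$, and denominator $\gtrsim \mu^2 n\tfrac{\psi}{\psi-1}$ give $\psi/((\psi-1)\mu^2)$ directly. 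With that one-line rearrangement and the two normalization fixes, your argument coincides with the paper's; as written, the decisive step is missing. (Your treatment of $R_n^2$ via concentration of $\|\epsilon_i\|^2$ is fine and comparable to the paper, which does not belabor it.)
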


This Lemma and Theorem~\ref{thm:regret} immediately imply that for the Gaussian mixture model~\eqref{eq:gen1}, 
	\begin{align}
		 \limsup_{n\rightarrow \infty} \sum_{i=1}^n \mathbbm{1}\big\{ y_i \widehat{f}_{S_{i-1}} (x_i) \leq 0 \big\} \leq (\mu+1)^2 \frac{\psi}{(\psi-1)\mu^2}   \enspace, ~~\text{a.s.}
	\end{align}
In other words, we make at most a constant number of errors with probability one in the infinite sequence of problems.

We can also establish a lower bound on $r_n^2$, which appears in the lower bound of Theorem~\ref{thm:regret}. This model shows that our upper bound in Theorem~\ref{thm:regret} cannot be significantly improved.
\begin{lemma}
	\label{lem:K-upp-low}
	The following bound holds under the same setting as in Lemma~\ref{lem:GMM_linear}
	\begin{align}
		r_n^2 &\geq (\mu^2+1) \frac{1}{1+C_{\mu, \psi} \cdot n},~~ \text{a.s.}
	\end{align}
	where $C_{\mu, \psi}>0$ is a constant that does not depend on $n$.
\end{lemma}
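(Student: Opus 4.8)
The plan is to reduce the bound on $r_n^2$ to a lower bound on the smallest eigenvalue of the Gram matrix, and then handle that eigenvalue by algebraically peeling off the rank‑one signal before invoking a standard random‑matrix estimate.

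First I would rewrite $r_n^2$ in linear‑algebraic terms. With the linear kernel, $s_{S_n\backslash i}^2$ is the squared distance from $x_i$ to the span of the remaining columns, and the standard constrained‑quadratic identity $\min\{v^\top K_n v : e_i^\top v = 1\} = 1/[K_n^{-1}]_{ii}$ (with $e_i$ the $i$th standard basis vector) gives $s_{S_n\backslash i}^2 = 1/[K_n^{-1}]_{ii}$, where $K_n = \Phi^\top\Phi$ is the $n\times n$ Gram matrix and $\Phi = [x_1\,\cdots\,x_n]\in\Reals^{d\times n}$. Since $[K_n^{-1}]_{ii} = e_i^\top K_n^{-1} e_i \le \|K_n^{-1}\|_{\mathrm{op}} = 1/\lambda_{\min}(K_n)$, we obtain the deterministic inequality $r_n^2 = 1/\max_i [K_n^{-1}]_{ii} \ge \lambda_{\min}(K_n) = \sigma_{\min}(\Phi)^2$ (alternatively, bounding $\max_i[K_n^{-1}]_{ii}$ by $\trace(K_n^{-1}) \le n/\lambda_{\min}(K_n)$ gives the slightly weaker $r_n^2 \ge \lambda_{\min}(K_n)/n$, which is already more than enough). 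So it suffices to bound $\sigma_{\min}(\Phi)$ from below, where $\Phi = \mu\,\theta_\star y^\top + E$ with $E = [\epsilon_1\,\cdots\,\epsilon_n]$ and $y=(y_1,\ldots,y_n)^\top$.

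The key step is an exact identity that discards the signal term. For any $v\in\Reals^n$, writing $t = \theta_\star^\top E v$ and completing the square,
\begin{align*}
	\|\Phi v\|_2^2 &= \mu^2 (y^\top v)^2 + 2\mu (y^\top v)\, t + \|Ev\|_2^2 \\
	&= \big(\mu\, y^\top v + t\big)^2 + \|Ev\|_2^2 - t^2 = \big(\mu\, y^\top v + \theta_\star^\top E v\big)^2 + \|\Pi_{\theta_\star^\perp} E v\|_2^2 \;\ge\; \|\Pi_{\theta_\star^\perp} E v\|_2^2 ,
\end{align*}
where $\Pi_{\theta_\star^\perp}$ is the orthogonal projection onto $\theta_\star^\perp$ and we used $\|Ev\|_2^2 - (\theta_\star^\top Ev)^2 = \|\Pi_{\theta_\star^\perp}Ev\|_2^2$ (Pythagoras, since $\|\theta_\star\|_2 = 1$). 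Minimizing over unit $v$ gives $\lambda_{\min}(K_n) = \sigma_{\min}(\Phi)^2 \ge \sigma_{\min}(\Pi_{\theta_\star^\perp}E)^2$. This is the crux: the dominant rank‑one term $\mu\,\theta_\star y^\top$ (whose contribution to $\|\Phi v\|^2$ scales like $\mu^2 n$) is removed cleanly rather than through a lossy perturbation bound. To finish, I would note that $\Pi_{\theta_\star^\perp}E$ is isometric (after restricting the codomain to $\theta_\star^\perp$) to a $(d-1)\times n$ matrix with i.i.d.\ $\cN(0,1/d)$ entries and aspect ratio $n/(d-1)\to 1/\psi\in(0,1)$; hence by the Bai--Yin theorem on the hard edge of the Marchenko--Pastur law (or, in the paper's concentration style, a non‑asymptotic lower bound on the least singular value of a Gaussian matrix combined with Borel--Cantelli, using $d-n=(\psi-1)n\to\infty$) one has $\sigma_{\min}(\Pi_{\theta_\star^\perp}E)^2 \ge (1-1/\sqrt\psi)^2 - o(1) \ge c_\psi := \tfrac12(1-1/\sqrt\psi)^2 > 0$ almost surely for all large $n$. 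Combining with the reduction, $r_n^2 \ge c_\psi$ almost surely eventually, which a fortiori gives $r_n^2 \ge (\mu^2+1)/(1+C_{\mu,\psi}n)$ once $C_{\mu,\psi}$ is chosen so that $(\mu^2+1)/(1+C_{\mu,\psi}n) \le c_\psi$ for all $n\ge 1$, e.g.\ $C_{\mu,\psi} = (\mu^2+1)/c_\psi$.

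The main obstacle is precisely controlling $\lambda_{\min}(K_n)$ in the presence of the large rank‑one signal: a naive Weyl/perturbation estimate subtracts a term of order $\|E^\top\theta_\star\|_2^2 \approx 1/\psi$ from the Marchenko--Pastur lower edge $(1-1/\sqrt\psi)^2$, which is useless when $\psi$ is close to $1$. The completing‑the‑square identity is what makes the argument go through for every $\psi>1$; everything else is bookkeeping plus one standard random‑matrix fact. (Note that this route in fact proves the stronger statement that $r_n^2$ is bounded below by a positive constant a.s., so the $1/n$ in the stated bound is slack we do not need.)
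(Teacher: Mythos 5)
Your argument is correct, and it in fact proves something stronger than the stated lemma by a genuinely different route. You pass from $r_n^2=\min_i s_{S_n\backslash i}^2 = 1/\max_i [K_n^{-1}]_{ii}$ to $\lambda_{\min}(K_n)=\sigma_{\min}(\Phi)^2$, kill the rank-one signal exactly by projecting onto $\theta_\star^\perp$ (since $\Pi_{\theta_\star^\perp}\Phi=\Pi_{\theta_\star^\perp}E$, the inequality $\|\Phi v\|\geq\|\Pi_{\theta_\star^\perp}Ev\|$ is immediate), and then invoke a standard least-singular-value bound for a tall $(d-1)\times n$ Gaussian matrix with $N(0,1/d)$ entries, yielding $r_n^2\geq \tfrac12(1-\psi^{-1/2})^2$ almost surely for all large $n$; this constant bound implies the stated $(\mu^2+1)/(1+C_{\mu,\psi}n)$ bound for a suitable $C_{\mu,\psi}$. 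The paper instead works pointwise in $i$: it conditions on $x_i=x$, uses the Schur-complement/Sherman--Morrison identity to write $s_{S_n\backslash i}^2=\|x\|^2\big/\big(1+v^\top[X_{S_n\backslash i}\Pi_x^\perp X_{S_n\backslash i}^\top]^{-1}v\big)$ with $v=X_{S_n\backslash i}\Pi_x$, splits off the residual spike direction $u$, and controls the resulting quadratic forms via Gaussian rotational invariance and $\chi^2$ concentration (Proposition~\ref{prop:concentration}), obtaining a denominator of order $1+C_{\mu,\psi}n$. Your relaxation to $\lambda_{\min}$ discards the $\|x_i\|^2\approx\mu^2+1$ scale and the per-point structure that the paper's identity makes explicit, but as a lower bound that is immaterial, and removing the spike exactly (rather than through the Cauchy--Schwarz-type Gram-determinant estimate) is what upgrades the $1/n$ rate to a $\mu$-independent constant --- which, if anything, supports the paper's surrounding claim (tightness of the regret upper bound) even more directly.

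Two small caveats, neither a gap specific to you: the lemma's ``a.s.'' must be read as ``almost surely for all sufficiently large $n$'' (for fixed finite $n$ no deterministic lower bound holds with probability one), which is also how the paper's own concentration-plus-Borel--Cantelli argument concludes, so your ``eventually'' phrasing matches; and your final choice of $C_{\mu,\psi}$ only needs to make the displayed bound hold in that eventual sense, not literally for every $n\geq 1$.
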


\subsection{Generalizations of the Mixture Model}
\label{sec:general-mixture}

We now turn to a more general mixture model where we allow for more general covariance structures. In doing so, we will find a variety of data generating processes that exhibit slow norm growth, but also show that there may be cases where this is not the case. 
Consider
$\phi_{x_i} \in \Reals^p$ and $K(x, x') = \langle \phi_{x}, \phi_{x'} \rangle_K$ with the inner product in $\Reals^p$. Suppose the following generative structure holds with a positive-definite covariance matrix $\Sigma \in \Reals^{p \times p}$ 
\begin{align}
	 \phi_{x_i} = y_i \cdot \mu \cdot \theta_\star + \Sigma^{1/2} \epsilon_i\enspace,
\end{align}
where $\mu > 0$ is the signal strength and  $\theta_\star$ is a unit vector in $\Reals^p$. Let $\lambda_1(\Sigma), \ldots, \lambda_p(\Sigma) >0$ denote the non-increasing sequence of eigenvalues of $\Sigma$. 
\begin{assumption}[Weak Moment Conditions]
	\label{asmp:moments}
	$\epsilon_{ij}, 1\leq i\leq n, 1\leq j \leq p(n)$ are i.i.d. entries from a distribution with, zero first-moment, unit second-moment and uniformly-bounded $m$-th moment, with $m\geq 8$. 
\end{assumption}
The zero first-moment and unit second-moment conditions can be assumed without loss of generality. The more significant assumptions made here are the requirement of bounded $m$th-moments and the i.i.d. entries in $\epsilon_i$. We leave as future work relaxing this assumption with a small-ball analysis or some other more sophisticated random matrix theory.

\begin{assumption}[Over-parametrization]
	\label{asmp:overparam}
	For sufficiently large $C>0$, we have
	 $ \frac{p}{\log p} > C \cdot n \;.$
\end{assumption}
This assumption ensures that the feature map is sufficiently over-parametrized such that the empirical kernel matrix is of full rank $n$.

With the above, the following non-asymptotic bound holds
\begin{theorem}
	\label{thm:general-cov}
	Under the Assumptions~\ref{asmp:moments} and \ref{asmp:overparam},
	the following bound holds almost surely,
	\begin{align}
		\frac{1}{n} \sum_{i=1}^n \big( y_i - \widehat f_{S_{i-1}} (x_i) \big)^2 \leq R_n^2 \frac{\| \widehat f_{S_n} \|_K^2 }{n} \leq c_1 \cdot \big( \mu^2 + {\rm tr}(\Sigma) + \gamma_p \big)  \frac{ \frac{1}{\lambda_p(\Sigma)} \frac{n}{p} \left(  1 +  c_2 \cdot \frac{ \theta_\star^\top \Sigma \theta_\star }{\lambda_p(\Sigma)} \frac{n}{p}  \right)}{ 1 + c_3 \cdot \frac{\big( \mu^2 + \theta_\star^\top \Sigma \theta_\star  \big)}{\lambda_1(\Sigma)}  \frac{n}{p} } \cdot  \frac{1}{n}
	\end{align}
	where $\gamma_p = p^{\frac{1}{2} + \frac{2}{m}} (\log p)^{0.51}$ and $c_j,1\leq j\leq 3$ are universal constants.
\end{theorem}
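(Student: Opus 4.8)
The first inequality in the display is exactly the regret bound of Theorem~\ref{thm:regret} (recall $R_n^2=\max_{i\le n}\|\phi_{x_i}\|_K^2$), so I only need two almost-sure estimates: an upper bound on $R_n^2$ and an upper bound on $\|\widehat f_{S_n}\|_K^2$. For $R_n^2$, I would write $\|\phi_{x_i}\|^2=\mu^2+2y_i\mu\,\theta_\star^\top\Sigma^{1/2}\epsilon_i+\epsilon_i^\top\Sigma\epsilon_i$, bound the cross term by $\mu^2+\epsilon_i^\top\Sigma\epsilon_i$ (Cauchy--Schwarz and $\|\theta_\star\|=1$), and reduce to $R_n^2\le 2\mu^2+2\max_{i\le n}\epsilon_i^\top\Sigma\epsilon_i$. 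A Marcinkiewicz--Zygmund-type moment bound for the quadratic forms $\epsilon_i^\top\Sigma\epsilon_i-{\rm tr}(\Sigma)$ (available since the entries have uniformly bounded $m$-th moment, $m\ge 8$), a union bound over the $n\le p$ samples, and Borel--Cantelli then give $\max_{i\le n}\epsilon_i^\top\Sigma\epsilon_i\le {\rm tr}(\Sigma)+\gamma_p$ a.s.\ for all large $n$, with $\gamma_p=p^{1/2+2/m}(\log p)^{0.51}$ being exactly the size produced by the moment exponent and the a.s.\ slack. Hence $R_n^2\lesssim \mu^2+{\rm tr}(\Sigma)+\gamma_p$ a.s.

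For the norm, the plan is to normalize away the labels. Let $D=\diag(y)$ and $E=[\tilde\epsilon_1,\dots,\tilde\epsilon_n]$ with $\tilde\epsilon_i:=y_i\epsilon_i$; conditionally on $y$ this matrix still satisfies Assumption~\ref{asmp:moments}, so it suffices to argue for a fixed $y$. Since $y_i\phi_{x_i}=\mu\theta_\star+\Sigma^{1/2}\tilde\epsilon_i$, one gets $K=D\tilde K D$ with $\tilde K=\tilde\Phi^\top\tilde\Phi$, $\tilde\Phi:=\mu\theta_\star\mathbf 1^\top+\Sigma^{1/2}E$, hence $\|\widehat f_{S_n}\|_K^2=y^\top K^{-1}y=\mathbf 1^\top\tilde K^{-1}\mathbf 1$ (using $Dy=\mathbf 1$). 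Expanding, $\tilde K=MM^\top+N$ where $M=\mu\mathbf 1+v$, $v:=E^\top\Sigma^{1/2}\theta_\star$, and $N:=E^\top\Sigma^{1/2}(I-\theta_\star\theta_\star^\top)\Sigma^{1/2}E\succeq 0$, which is invertible a.s.\ under Assumption~\ref{asmp:overparam}. The Sherman--Morrison identity then yields, with $A:=\mathbf 1^\top N^{-1}\mathbf 1$, $B:=\mathbf 1^\top N^{-1}v$, $C:=v^\top N^{-1}v$,
\[
 \mathbf 1^\top\tilde K^{-1}\mathbf 1=\frac{A(1+C)-B^2}{1+M^\top N^{-1}M}\le\frac{A(1+C)}{1+M^\top N^{-1}M}\,,
\]
after dropping $B^2\ge 0$; I would then use $A\le n/\lambda_{\min}(N)$, $C\le\|v\|^2/\lambda_{\min}(N)$, and $M^\top N^{-1}M\ge\|M\|^2/\lambda_{\max}(N)$.

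What remains is a handful of random-matrix estimates, all holding a.s.\ for large $n$: (i) $\|v\|^2\le\frac32 n\,\theta_\star^\top\Sigma\theta_\star$ and $\|M\|^2\ge\frac12 n(\mu^2+\theta_\star^\top\Sigma\theta_\star)$, from a (triangular-array) law of large numbers with Borel--Cantelli applied to $\|v\|^2=\sum_i(\epsilon_i^\top\Sigma^{1/2}\theta_\star)^2$ and to $\|M\|^2=\mu^2 n+2\mu\mathbf 1^\top v+\|v\|^2$; (ii) $\lambda_{\max}(N)\le\lambda_1(\Sigma)\sigma_{\max}(E)^2\le 2\lambda_1(\Sigma)\,p$; (iii) $\lambda_{\min}(E^\top\Sigma E)\ge\lambda_p(\Sigma)\sigma_{\min}(E)^2\ge\tfrac12\lambda_p(\Sigma)\,p$; and (iv) $v^\top(E^\top\Sigma E)^{-1}v\le\tfrac12$, which follows from (iii) and (i) since this quantity is $\le\|v\|^2/\lambda_{\min}(E^\top\Sigma E)\lesssim\frac{\theta_\star^\top\Sigma\theta_\star}{\lambda_p(\Sigma)}\frac np$. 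Using $x^\top Nx=\|(E^\top\Sigma E)^{1/2}x\|^2-(v^\top x)^2\ge \lambda_{\min}(E^\top\Sigma E)\,(1-v^\top(E^\top\Sigma E)^{-1}v)\,\|x\|^2$ together with (iii)--(iv) gives $\lambda_{\min}(N)\ge\tfrac14\lambda_p(\Sigma)\,p$. Substituting into the Sherman--Morrison bound, the numerator is $\lesssim\frac{n}{\lambda_p(\Sigma)p}\big(1+c_2\frac{\theta_\star^\top\Sigma\theta_\star}{\lambda_p(\Sigma)}\frac np\big)$ and the denominator is $\gtrsim 1+c_3\frac{\mu^2+\theta_\star^\top\Sigma\theta_\star}{\lambda_1(\Sigma)}\frac np$, which is the claimed bound on $\|\widehat f_{S_n}\|_K^2$; multiplying by the $R_n^2$ bound and dividing by $n$ finishes the proof.

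The main obstacle is (iii), and hence (iv): an a.s.\ lower bound $\sigma_{\min}(E)^2\gtrsim p$ — equivalently keeping $v^\top(E^\top\Sigma E)^{-1}v$ bounded away from $1$ so that $N$ is uniformly well-conditioned — for an $n\times p$ random matrix whose entries merely have a bounded $m$-th moment rather than light tails. This is where the over-parametrization must be by a logarithmic factor (Assumption~\ref{asmp:overparam}), the extra $\log p$ feeding the Borel--Cantelli step, and it is the source of the $p^{1/2+2/m}(\log p)^{0.51}$ exponent in $\gamma_p$. The remaining ingredients — the large-$\sigma_{\max}$ bound (ii) and the laws of large numbers in (i) and the $R_n^2$ estimate — are routine given the moment hypothesis.
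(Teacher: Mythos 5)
Your route is, in all essentials, the paper's own: after absorbing the labels into the noise, both arguments split the feature matrix along $\theta_\star$ and its orthogonal complement, write $\|\widehat f_{S_n}\|_K^2=\mathbf 1^\top\tilde K^{-1}\mathbf 1$ with $\tilde K=MM^\top+N$ (the paper's $g$ and $ZZ^\top$ are your $v$ and $N$), use exactly the same Sherman--Morrison/Woodbury expression $\frac{A(1+C)-B^2}{1+M^\top N^{-1}M}$, drop $B^2$, and bound $A,C$ by $\lambda_{\min}(N)^{-1}$ and $M^\top N^{-1}M$ by $\|M\|^2/\lambda_{\max}(N)$, with the same concentration toolkit (quadratic-form concentration giving $\gamma_p$ and the $R_n^2$ bound, a fourth-moment/Borel--Cantelli law of large numbers for $\|v\|^2$ and $\langle\mathbf 1,v\rangle$ -- this is where $m\ge 8$ is used -- and a Vershynin-type singular value bound under the over-parametrization assumption). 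The one place you depart from the paper is the lower bound on $\lambda_{\min}(N)$, and that is where there is a genuine gap.

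Your step (iv) asserts $v^\top W^{-1}v\le\tfrac12$ with $W=E^\top\Sigma E$, justified by $v^\top W^{-1}v\lesssim\frac{\theta_\star^\top\Sigma\theta_\star}{\lambda_p(\Sigma)}\frac np$. But Assumptions~\ref{asmp:moments} and \ref{asmp:overparam} do not make that quantity small: it is precisely the quantity the theorem allows to be large (it sits inside the $c_2$ factor of the stated bound), and in the paper's own illustrative regimes ($\lambda_i(\Sigma)=i^{-\alpha}$, $\alpha\in(0,1/2)$, $p=n^y$ with $1<y<\tfrac1{1-\alpha}$, $\theta_\star$ aligned with the top eigendirection) it diverges. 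Note $v^\top W^{-1}v=\|\Pi_{\operatorname{range}(\Sigma^{1/2}E)}\theta_\star\|^2\in[0,1]$ and can be arbitrarily close to $1$, in which case your Cauchy--Schwarz bound $\lambda_{\min}(N)\ge\lambda_{\min}(W)\big(1-v^\top W^{-1}v\big)$ degenerates and no longer yields $\lambda_{\min}(N)\gtrsim\lambda_p(\Sigma)\,p$, so the control of $A$ and $C$ collapses exactly in the regime the theorem is designed to cover. The conclusion is nevertheless true, and the repair stays inside your toolkit: bound $\lambda_{\min}(N)$ directly rather than through $W-vv^\top$. Since the nonzero eigenvalues of $\Sigma^{1/2}\Pi_{\theta_\star}^\perp\Sigma^{1/2}$ are at least $\lambda_p(\Sigma)$ by Cauchy interlacing, one has $\Sigma^{1/2}\Pi_{\theta_\star}^\perp\Sigma^{1/2}\succeq\lambda_p(\Sigma)\,P$ with $P$ the orthogonal projector onto its range, hence $\lambda_{\min}(N)\ge\lambda_p(\Sigma)\,\sigma_{\min}(PE)^2$, and for any unit $x\in\Reals^n$, $\|PEx\|^2\ge\|Ex\|^2-\langle w,Ex\rangle^2\ge\sigma_{\min}(E)^2-\|E^\top w\|^2\gtrsim p-O(n+\gamma_p)\gtrsim p$ almost surely under Assumption~\ref{asmp:overparam}, where $w$ is the unit normal of $\operatorname{range}(P)$; this is the estimate the paper invokes (somewhat implicitly) through its Propositions on singular values and quadratic forms. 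A last, minor point: after absorbing the labels the entries $y_i\epsilon_{ij}$ are independent with the stated moments but not necessarily identically distributed; the concentration propositions still apply since their proofs use only independence and moment bounds, and the paper performs the same absorption silently.
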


\begin{corollary}
	\label{coro:general-ridge}
	Under the same assumptions and notations as in Theorem~\ref{thm:general-cov}, now consider the ridge regularized predictor $\widehat{f}_{S_i, \lambda_\star}$ in Section~\ref{sec:ridge}, with an explicit regularization $\lambda_\star \geq 0$. The following bound holds almost surely,
	\begin{align}
		&\frac{1}{n} \sum_{i=1}^n \big( y_i - \widehat f_{S_{i-1}, \lambda_\star} (x_i) \big)^2 \leq R^2_{n, \lambda_\star} \frac{y^\top \big[  K + \lambda_\star I  \big]^{-1} y}{n} \nonumber\\
		& \quad \quad \quad \quad \leq c_1 \cdot \big( \lambda_\star + \mu^2 + {\rm tr}(\Sigma) + \gamma_p \big)  \frac{ \frac{ 1}{\lambda_\star/p + \lambda_p(\Sigma) } \frac{n}{p} \left(  1 +  c_2 \cdot \frac{ \theta_\star^\top \Sigma \theta_\star }{\lambda_\star/p + \lambda_p(\Sigma) } \frac{n}{p}  \right)}{ 1 + c_3 \cdot \frac{ (\mu^2 + \theta_\star^\top \Sigma \theta_\star ) }{ \lambda_\star/p +  \lambda_1(\Sigma) } \frac{n}{p}  } \cdot  \frac{1}{n} \enspace.
	\end{align}
\end{corollary}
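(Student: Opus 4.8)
The plan is to reduce Corollary~\ref{coro:general-ridge} to Theorem~\ref{thm:general-cov} via the substitution recipe of Section~\ref{sec:ridge}, so that essentially no new random‑matrix work is required. By the ridge analog of Theorem~\ref{thm:regret} (replace $K$ by $K+\lambda_\star I$, replace $\|\widehat f_{S_n}\|_K^2$ by the complexity measure $B_{n,\lambda_\star}^2 := y^\top(K+\lambda_\star I)^{-1}y$, and replace $s_i^2$ by $s_{i,\lambda_\star}^2$), one has
\[
	\sum_{i=1}^n \big(y_i - \widehat f_{S_{i-1},\lambda_\star}(x_i)\big)^2 \;\le\; R_{n,\lambda_\star}^2 \cdot y^\top(K+\lambda_\star I)^{-1} y \,,
\]
which is the first inequality of the Corollary after dividing by $n$. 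The remaining task is to bound the two factors $R_{n,\lambda_\star}^2$ and $y^\top(K+\lambda_\star I)^{-1}y$ almost surely, under Assumptions~\ref{asmp:moments} and~\ref{asmp:overparam}. (For $\lambda_\star>0$ the solutions always exist; for $\lambda_\star=0$ we are back to Theorem~\ref{thm:general-cov}.)

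For the first factor, plugging the zero vector into the definition of $s_{i,\lambda_\star}^2$ gives $s_{i,\lambda_\star}^2 \le \|\varphi_{x_i}\|_K^2 + \lambda_\star$, so we may take $R_{n,\lambda_\star}^2 = R_n^2 + \lambda_\star$. The proof of Theorem~\ref{thm:general-cov} already shows $R_n^2 = \max_i\|\varphi_{x_i}\|_K^2 \le c\,(\mu^2 + {\rm tr}(\Sigma) + \gamma_p)$ almost surely — a maximum of $n$ quadratic forms $\|\Sigma^{1/2}\epsilon_i\|_2^2$ concentrating near ${\rm tr}(\Sigma)$ with fluctuation $\gamma_p$ under the $m$th‑moment hypothesis, plus the $\mu^2$ signal contribution. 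Hence $R_{n,\lambda_\star}^2 \le c_1(\lambda_\star + \mu^2 + {\rm tr}(\Sigma) + \gamma_p)$, the prefactor in the Corollary.

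For the second factor, the key observation is that $K + \lambda_\star I_n = \Phi^\top\Phi + \lambda_\star I_n$, where $\Phi = \mu\theta_\star y^\top + \Sigma^{1/2}E$ is the feature matrix written as signal plus noise as in the proof of Theorem~\ref{thm:general-cov}; equivalently, $K+\lambda_\star I$ is the Gram matrix of the augmented features $(\varphi_{x_i};\sqrt{\lambda_\star}\,e_i)\in\Reals^{p+n}$, whose noise block $E^\top\Sigma E$ is simply shifted to $E^\top\Sigma E + \lambda_\star I_n$. Adding $\lambda_\star I_n$ shifts every eigenvalue of $K$ up by exactly $\lambda_\star$; in particular the extremal eigenvalues of the noise Gram, which in that proof are controlled at the scales $p\,\lambda_p(\Sigma)$ and $p\,\lambda_1(\Sigma)$, become $p\,\lambda_p(\Sigma)+\lambda_\star = p\big(\lambda_p(\Sigma)+\lambda_\star/p\big)$ and $p\big(\lambda_1(\Sigma)+\lambda_\star/p\big)$. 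Re-running the linear‑algebra bound on $y^\top(\cdot)^{-1}y$ from that proof verbatim — the signal block and the quantities $\mu^2$, $\theta_\star^\top\Sigma\theta_\star$ live in the untouched first $p$ coordinates and are unchanged, while every occurrence of $\lambda_p(\Sigma)$ in the numerator and of $\lambda_1(\Sigma)$ in the denominator arose precisely as one of these noise‑Gram eigenvalues — yields exactly the stated fraction with $\lambda_p(\Sigma)$ replaced by $\lambda_\star/p+\lambda_p(\Sigma)$ and $\lambda_1(\Sigma)$ by $\lambda_\star/p+\lambda_1(\Sigma)$. Combining with the prefactor bound and dividing by $n$ gives the Corollary, and the almost‑sure qualifier is inherited directly from Theorem~\ref{thm:general-cov}.

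The main (and fairly mild) obstacle is bookkeeping inside the proof of Theorem~\ref{thm:general-cov}: one must check that $\lambda_p(\Sigma)$ and $\lambda_1(\Sigma)$ enter only as the lower and upper extremal‑eigenvalue controls on the noise Gram, so that the deterministic shift by $\lambda_\star I_n$ passes through each inequality in the right direction, and one must track the $1/p$ normalization so that the shift surfaces as $\lambda_\star/p$ rather than $\lambda_\star$. No new concentration inequality is needed — $E^\top\Sigma E + \lambda_\star I_n$ differs from $E^\top\Sigma E$ by a fixed multiple of the identity, so one may invoke Weyl's inequality on the augmented‑feature Gram — and Assumptions~\ref{asmp:moments} and~\ref{asmp:overparam}, being used only for those noise‑Gram estimates, suffice unchanged.
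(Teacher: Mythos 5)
Your proposal is correct and follows essentially the same route as the paper: the ridge analog of Theorem~\ref{thm:regret} with $R_{n,\lambda_\star}^2 \leq \lambda_\star + R_n^2$, and then a rerun of the proof of Theorem~\ref{thm:general-cov} in which the noise Gram $ZZ^\top$ is replaced by $ZZ^\top + \lambda_\star I_n$, so that the extremal-eigenvalue controls become $p\big(\lambda_j(\Sigma)+\lambda_\star/p\big)$ for $j\in\{1,p\}$ while the signal terms are untouched. The only step you leave implicit (and which the paper states explicitly) is the identity $y^\top\big[\lambda_\star I + \Phi_n\Phi_n^\top\big]^{-1}y = \mathbf{1}^\top\big[\lambda_\star I + (Y_n\circ\Phi_n)(Y_n\circ\Phi_n)^\top\big]^{-1}\mathbf{1}$, which is immediate because conjugation by $\operatorname{diag}(y)$ with $y_i=\pm1$ leaves $\lambda_\star I$ unchanged; aside from a transpose-level slip ($E\Sigma E^\top$ rather than $E^\top\Sigma E$), your argument matches the paper's.
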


\begin{figure}[ht]
	\centering
	\includegraphics[width=0.5\textwidth]{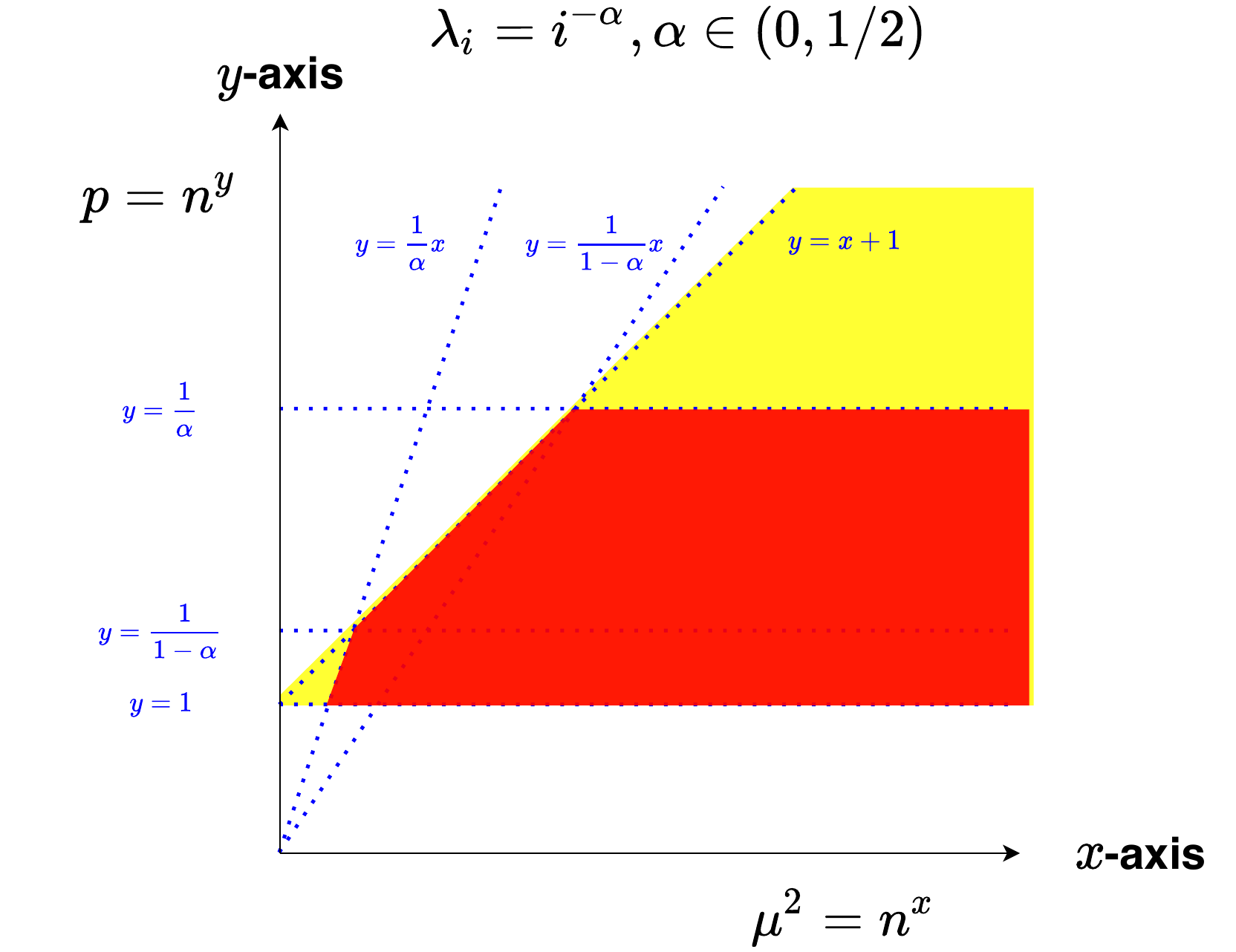}
	\caption{Illustration diagram on when slow norm growth is possible. $x$-axis corresponds to $\log(\mu^2)/\log(n)$, and $y$-axis corresponds to $\log (p)/\log (n)$. Red region shows when the upper bound in Theorem~\ref{thm:general-cov} is $o(1)$, the extended Yellow region demonstrates that with a properly chosen $\lambda_\star$, the upper bound in Corollary~\ref{coro:general-ridge} is $o(1)$.}
	\label{fig:snr}
\end{figure}

Let's unpack the bound in Theorem~\ref{thm:general-cov} by exploring some examples, highlighting the impact of the signal strength $\mu$, over-parametrization $p$, and error covariance $\Sigma$ on the regret. Consider a range of problem instances with sub-Gaussian $\epsilon_{ij}$ where 
\begin{enumerate}
 \item  $\lambda_i(\Sigma) = i^{-\alpha}, 1\leq i\leq p$ with some $\alpha \in (0, 1/2)$,
 \item $\mu^2 = n^{x}$ for $x \in (0, \infty)$, 
 \item $p = n^{y}$ for $y \in (1, \infty)$.
\end{enumerate}
For any fixed $\alpha \in (0, 1/2)$, Figure~\ref{fig:snr} illustrates the region on the $x, y$ domain when the upper bounds in Theorem~\ref{thm:general-cov} in Corollary~\ref{coro:general-ridge} are $o(1)$, asserting the plausible scenarios for slow norm growth. The detailed calculations are deferred to Appendix~\ref{sec:auxilary}. Conceptually, when the decay of eigenvalues is fast, the empirical kernel matrix will become ill-conditioned, suggesting that the interpolant's norm could grow rapidly. The region where such ill-conditioning is ruled out is colored red in Figure~\ref{fig:snr}. Explicit regularization can mitigate the rapid norm growth, as shown in the yellow region.

\section{Separation and Slow Norm Growth}
\label{sec:nonlinear-kernel}

We close by considering binary classification with general non-linear kernels. Let $\eta_\star$ denote the conditional expectation of $y$ given $x$
	\begin{align}
		\eta_\star(x) = \mathbb{P}(\by = +1 | \bx = x) - \mathbb{P}(\by = -1 | \bx = x) \enspace.
	\end{align}
	We assume that with some $B_\star>0$
	$$
	\| \eta_\star \|_{K}^2 \leq B_\star^2  \enspace.
	$$
	
	\begin{lemma}
		\label{lem:non-linear}
		Consider $S_n$ consists of i.i.d. data drawn from some distribution $\cD$.
		Denote $X$ as the concatenated data matrix of $\{x_i\}_{i=1}^n$. Conditional on the design $X$, the following bound holds,
		\begin{align}
			\E\big[ \| \widehat{f}_{S_n} \|_K^2 ~|~ X \big] \leq \| \eta_\star \|_{K}^2 +  \frac{1}{r_n^2} \sum_{i=1}^n \big(1-  \eta_\star^2(x_i) \big)  \enspace.
		\end{align}
	\end{lemma}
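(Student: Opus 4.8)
The plan is to combine the closed-form expression for the minimum-norm interpolant with the conditional second-moment structure of the labels, and then reuse the partitioned-inverse computation from the algebraic proof of Lemma~\ref{lemma:sequential-error}. First I would invoke the representer identity: when $K = K(X,X)$ has full rank, $\widehat{f}_{S_n}(x) = K(x,X)K^{-1}y$ and $\|\widehat{f}_{S_n}\|_K^2 = y^\top K^{-1} y = \trace\!\big(K^{-1} y y^\top\big)$, where $y = (y_1,\dots,y_n)^\top$. Since this is linear in $yy^\top$, I can pass the conditional expectation inside the trace: $\E\big[\|\widehat{f}_{S_n}\|_K^2 \mid X\big] = \trace\!\big(K^{-1}\,\E[yy^\top \mid X]\big)$.

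Next I would compute $\E[yy^\top \mid X]$. Because the sample is i.i.d., conditionally on $X$ the labels $y_i$ are independent with $\E[y_i \mid X] = \eta_\star(x_i)$, and $y_i^2 = 1$ since $y_i \in \{\pm 1\}$. Writing $\eta = (\eta_\star(x_1),\dots,\eta_\star(x_n))^\top$, this gives $\E[yy^\top \mid X] = \eta\eta^\top + D$ with $D = \diag\!\big(1 - \eta_\star^2(x_1),\dots,1-\eta_\star^2(x_n)\big)$. Hence $\E\big[\|\widehat{f}_{S_n}\|_K^2 \mid X\big] = \eta^\top K^{-1}\eta + \sum_{i=1}^n (K^{-1})_{ii}\big(1-\eta_\star^2(x_i)\big)$, and it remains to bound the two pieces.

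For the first piece, $\eta^\top K^{-1}\eta$ is precisely the squared RKHS norm of the minimum-norm function interpolating the values $\eta_\star(x_i)$ on the design $X$; since $\eta_\star$ itself lies in $\cH$ and interpolates those values, the minimum-norm value can only be smaller, so $\eta^\top K^{-1}\eta \leq \|\eta_\star\|_K^2$. For the second piece I would apply the Schur-complement identity, the same block-inverse computation used in the algebraic proof of Lemma~\ref{lemma:sequential-error} but with the $i$th point permuted to the last coordinate: $(K^{-1})_{ii} = \big(K(x_i,x_i) - K(x_i,X_{-i})K(X_{-i},X_{-i})^{-1}K(X_{-i},x_i)\big)^{-1} = s_{S_n\backslash i}^{-2}$, the reciprocal of the squared distance from $\varphi_{x_i}$ to $\spn(\varphi_{x_j} : j \neq i)$. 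Since $s_{S_n\backslash i}^2 \geq r_n^2$ by the definition of $r_n$, we get $(K^{-1})_{ii} \leq 1/r_n^2$, and summing over $i$ yields $\sum_i (K^{-1})_{ii}(1-\eta_\star^2(x_i)) \leq \frac{1}{r_n^2}\sum_i (1-\eta_\star^2(x_i))$. Adding the two bounds gives the claimed inequality.

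All the steps are short; the only one that needs care is the identification $(K^{-1})_{ii} = s_{S_n\backslash i}^{-2}$, which is a direct reuse of the partitioned-inverse argument already in the paper, together with making sure the conditioning is handled correctly (independence of the $y_i$ given the entire design $X$, not merely given $x_i$, which is what legitimizes replacing the off-diagonal entries of $\E[yy^\top \mid X]$ by $\eta_\star(x_i)\eta_\star(x_j)$).
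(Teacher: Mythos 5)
Your proposal is correct and follows essentially the same route as the paper: expand $\|\widehat f_{S_n}\|_K^2 = y^\top K^{-1} y$, use $\E[yy^\top\mid X]=\eta\eta^\top+\diag(1-\eta_\star^2(x_i))$, bound $\eta^\top K^{-1}\eta\le\|\eta_\star\|_K^2$ (the paper phrases this as $\|\Pi_{\phi_{X_n}}\eta_\star\|_K^2\le\|\eta_\star\|_K^2$, which is the same fact as your min-norm-interpolant argument), and identify $(K^{-1})_{ii}=s_{S_n\backslash i}^{-2}\le r_n^{-2}$ via the Schur complement. No gaps.
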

	
Lemma~\ref{lem:non-linear} and Theorem~\ref{thm:regret} together imply
\begin{align}\label{eq:bayes-bound}
   \frac{1}{n} \sum_{i=1}^n \mathbb{P}\big[ y_i \widehat{f}_{S_{i-1}} (x_i) \leq 0 ~|~ X \big] 
  \leq  \frac{R_n^2}{r_n^2} \cdot \frac{1}{n} \sum_{i=1}^n ( 1- \eta_\star^2(x_i))+ \frac{R_n^2 B_\star^2}{n} \enspace.
\end{align}
The second term here is a generalization error that tends to zero with $n$. The first term is proportional to the \emph{Bayes error}
$$
	\mathcal{E}(X)= \frac{1}{n} \sum_{i=1}^n ( 1- \eta_\star^2(x_i) )
$$
Note that $\mathcal{E}(X)$ is nonnegative and measures a form of accuracy of the regression function on the sample.  $\mathcal{E}(X)$  is equal to zero when the regression function makes correct assignments for all data points and equal to $1$ if all of the points are impossible to distinguish under the generative model.  In order for the bound~\eqref{eq:bayes-bound} to imply low pediction error, we need the Bayes error to be small.

Two natural conditions under which the Bayes error can be further controlled are the Mammen-Tsybakov and Massart noise conditions. A slightly stronger version of Mammen-Tsybakov condition asserts that there exists some $\alpha \in [0,1]$ and $C_0>0$ such that for all $ t\in [0,1]$
\begin{align}
			\mathbb{P}\big[ |\eta_\star(\bx)| \leq t \big]
\leq C_0 \cdot t^{\frac{\alpha}{1-\alpha}}	\enspace.
	\end{align}
Under this noise condition, we have the following upper bound on the expected Bayes Error
		\begin{align}
			&\E \big[ \mathcal{E}(X) \big] = 1 - \E[ |\eta_\star( \bx)|^2]  = 1 - \int_{0}^{1} \mathbb{P}( |\eta_\star(\bx)|>\sqrt{t} ) \dif t \\
			& \leq 1- \int_{0}^{1} \big( 1-   C_0 \cdot t^{\frac{\alpha}{2(1-\alpha)}} \big) \dif t \leq C_0 \cdot \frac{2(1-\alpha)}{2-\alpha} \enspace.
		\end{align}
Note that when $\alpha = 1$ (i.e., under the Massart noise condition), the expected Bayes Error is 0.

We can construct more general conditions under which we expect $\mathcal{E}(X)$ to be small. Let $P_{+}$ denote the conditional distribution of $x$ given $y = +1$, and correspondingly define $P_{-}$. Let $d_{\rm TV}\left( P_{+}, P_{-} \right)$ denote the total variation distance between these two distributions.

\begin{lemma}
		\label{lem:separation} For any $\epsilon>0$,
\begin{align}
	\Pr\left[\mathcal{E}(X) \leq 4\epsilon \right] \geq  1 -\tfrac{\epsilon^{-1}+1}{2} n (1- d_{\rm TV}\left( P_{+}, P_{-} \right))  \enspace.
\end{align}
\end{lemma}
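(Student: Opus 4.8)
The plan is to bound the complementary probability $\Pr[\mathcal{E}(X) > 4\epsilon]$ by reducing it to a one-sample statement and then relating a single point's contribution to the overlap between $P_+$ and $P_-$. Since $\mathcal{E}(X) = \tfrac1n\sum_{i=1}^n\big(1 - \eta_\star^2(x_i)\big)$ is an average of nonnegative terms, the event $\{\mathcal{E}(X) > 4\epsilon\}$ is contained in $\{\exists\, i:\ 1 - \eta_\star^2(x_i) > 4\epsilon\}$, and since the $x_i$ are each distributed as $\bx$, a union bound gives $\Pr[\mathcal{E}(X) > 4\epsilon] \le n\,\Pr\big[1 - \eta_\star^2(\bx) > 4\epsilon\big]$ --- this is where the factor $n$ enters. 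It therefore suffices to prove the one-point estimate $\Pr\big[1 - \eta_\star^2(\bx) > 4\epsilon\big] \le \tfrac{\epsilon^{-1}+1}{2}\big(1 - d_{\rm TV}(P_+, P_-)\big)$.

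For the one-point estimate, write $q(x) = \Pr[\by = +1 \mid \bx = x]$, so that $\eta_\star(x) = 2q(x) - 1$ and hence $1 - \eta_\star^2(x) = 4\,q(x)\big(1 - q(x)\big)$. Because $q(1-q) \le \min\{q, 1-q\}$, the event $\{1 - \eta_\star^2(\bx) > 4\epsilon\}$ is contained in $\{\min\{q(\bx), 1 - q(\bx)\} > \epsilon\}$. Now let $\pi = \Pr[\by = +1]$ and let $p_+, p_-$ be densities of $P_+, P_-$ with respect to a common dominating measure $\nu$ (e.g. $\nu = \tfrac12(P_+ + P_-)$; this is the place to be careful if the $P_\pm$ have no Lebesgue densities), and let $p = \pi p_+ + (1-\pi) p_-$ be the resulting marginal density of $\bx$. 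By Bayes' rule $q = \pi p_+ / p$ and $1 - q = (1-\pi) p_- / p$, so on the event $\{\min\{q(\bx), 1-q(\bx)\} > \epsilon\}$ we have both $\pi\, p_+(\bx) > \epsilon\, p(\bx)$ and $(1-\pi)\, p_-(\bx) > \epsilon\, p(\bx)$; feeding these back into $p = \pi p_+ + (1-\pi) p_-$ yields a pointwise bound of the form $p(\bx) \le c_\epsilon\,\min\{p_+(\bx), p_-(\bx)\}$ on that event, where in the balanced case $\pi = \tfrac12$ one gets the clean $c_\epsilon = \tfrac1{2\epsilon} \le \tfrac{\epsilon^{-1}+1}{2}$ and the stated constant carries the remaining slack.

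Integrating this pointwise inequality against $\nu$ and using the identity $\int \min\{p_+, p_-\}\,d\nu = 1 - d_{\rm TV}(P_+, P_-)$ then yields
\[
\Pr\big[1 - \eta_\star^2(\bx) > 4\epsilon\big] \;=\; \int_{\{\min\{q,1-q\} > \epsilon\}} p \, d\nu \;\le\; c_\epsilon \int \min\{p_+, p_-\}\, d\nu \;=\; c_\epsilon\,\big(1 - d_{\rm TV}(P_+, P_-)\big),
\]
which, combined with the union bound of the first paragraph, completes the argument. The main obstacle is the pointwise density comparison: translating the posterior-probability event into a sharp inequality between the marginal density and the overlap $\min\{p_+, p_-\}$ --- a crude argument gives only $c_\epsilon = \epsilon^{-1}$, while pinning down $\tfrac{\epsilon^{-1}+1}{2}$ requires exploiting the inequalities $\pi p_+ > \epsilon p$ and $(1-\pi) p_- > \epsilon p$ simultaneously together with the decomposition $p = \pi p_+ + (1-\pi) p_-$ --- along with the measure-theoretic care needed to make the density manipulations rigorous when $P_+, P_-$ need not possess densities.
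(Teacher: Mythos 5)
Your proof is correct and follows essentially the same route as the paper's: a single-point ``ambiguous region'' where the posterior (equivalently the likelihood ratio) is non-extreme, a pointwise comparison of the marginal against the overlap $\min\{dP_+,dP_-\}$ to bring in $1-d_{\rm TV}(P_+,P_-)$, and a union bound over the $n$ i.i.d.\ points; your contrapositive phrasing and the region $\{\min\{q,1-q\}>\epsilon\}$ in place of the paper's $\{dP_+/dP_-\in[\epsilon,\epsilon^{-1}]\}$ are cosmetic differences, and in the balanced case you even obtain the slightly sharper constant $\tfrac{1}{2\epsilon}\le\tfrac{\epsilon^{-1}+1}{2}$. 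Your concern about general class priors $\pi$ is moot: the paper's own proof takes the marginal to be $\tfrac12(P_++P_-)$ and $\eta_\star=\tfrac{dP_+-dP_-}{dP_++dP_-}$, i.e.\ it implicitly assumes balanced classes, which is exactly the case in which your argument delivers the stated constant.
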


Lemma~\ref{lem:separation} gives a general way to bound the expected Bayes error by lower bounding the total variation between the class conditional distributions. For example, in the Gaussian mixture model in Section~\ref{sec:linear-kernel}, we can choose  $\epsilon = n^{-1}$ and have with probability at least
$
	1 - n^2\exp(-d\mu^2/2) = 1- o(1)
$
that the error is $O(1/n)$.

\section{Conclusions and Future Work}

The linear algebraic structure of MNIC and RLSC led to surprisingly simple generalization bounds. Moreover, this structure led to tractable analysis of the norms of the solutions of a variety of classification problems. As we mention in Section~\ref{sec:mistake}, our regret and generalization bounds apply to regression problems, but we do not know which data generating models will have solutions whose norm grows slowly with the number of data points. For instance, the standard planted model $y_i=x_i^\top \beta+\epsilon_i$, where $x_i$ is isotropic Gaussian and $\epsilon_i$ is independent noise, results in an interpolating solution whose norm scales linearly with $n$. It would be fascinating to understand plausible models for regression where this norm grows sublinearly.

Future work should also investigate whether other ERM problems admit simple mistake bounds as we now see for the perceptron, the SVM, MNIC, and RLSC. Is there a general proof that connects these aforementioned algorithms? Is there a general theory yielding simple $1/n$ rates for models learned by Stochastic Gradient Descent when there is an empirical risk minimizer has zero loss?


\iftoggle{arxiv}
{{\small
\bibliographystyle{abbrvnat}
\bibliography{interpolation}
}}
{
\bibliography{interpolation}
\newpage
}

\appendix

\section{ Proof of Proposition~\ref{prop:qr}}\label{sec:qr-proof}

First note that by construction, 
$$
	\phi_{x_k} = \sum_{i=1}^{k} R_{ik} q_i\,.
$$
Hence, $\Pi_{k-1}^{\perp} \phi_{x_k} = R_{kk} q_k$. From this same calculation, we have that $s_k = R_{kk}$.

Consider the vector $z=R^{-\top} y$. We have
$$
	z_k = \frac{y_k - \sum_{i=1}^{k-1} R_{ik} z_i}{R_{kk}}\,.
$$

Now let us proceed by induction. Certainly, for the base case, we have 
$$
\widehat{\beta}_1 = \frac{y_1}{s_1^2}  \phi_{x_i}=  \frac{y_1}{R_{11}^2} R_{11} q_1 = z_1 q_1\,.
$$ 

Now, suppose the formula holds for $i<k$ and consider 
$$
\begin{aligned}
\widehat{f}_{S_{k-1}}(x_k) = \langle \phi_{x_k} , \widehat\beta_{k-1} \rangle = \left\langle \sum_{j=1}^k  R_{jk} q_j, \sum_{\ell=1}^{k-1} q_\ell z_\ell \right\rangle
= \sum_{j=1}^{k-1} R_{jk} z_j\,.
\end{aligned}
$$
Thus, 
$$
	z_k = \frac{y_k -\widehat{f}_{S_{k-1}}(x_k) }{s_k}\,.
$$
and we have
$$
\widehat{\beta}_{k+1}-\widehat{\beta}_k = \frac{y_k -\widehat{f}_{S_{k-1}}(x_k) }{s_k^2} \Pi_{k-1}^{\perp} \phi_{x_k} = z_k q_k\,,
$$
which completes the proof.

\section{Probabilistic Tools for Analyzing Generative Models}

We will need two results from the literature. First, the following is Theorem 5.41 in \cite{vershynin2010introduction}.
\begin{proposition}
	\label{prop:singular-value}
	Let $A$ be an $p \times n$ random matrix whose rows $A_i$ are independent isotropic random vectors in $\Reals^n$. Assume that $\| A_i \|_2 \leq \sqrt{N}$ almost surely for all $i$. Then for every $t \geq 0$, one has the following bound on the singular values of $A$
	\begin{align}
		\sqrt{p} - t \sqrt{N} \leq s_{\min}(A) \leq s_{\max}(A) \leq \sqrt{p} + t \sqrt{N}
	\end{align}
	with probability at least $1-2n \cdot \exp(-ct^2)$, where $c>0$ is an absolute constant.
\end{proposition}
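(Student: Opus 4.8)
This proposition is quoted verbatim from the literature --- it is Theorem~5.41 of \cite{vershynin2010introduction} --- so in the paper itself nothing need be proved beyond the citation; nevertheless, here is the route I would take to reconstruct it. The first move is to reduce the two-sided singular value estimate to a single operator-norm deviation inequality for the Gram matrix. Since the rows $A_i\in\Reals^n$ are isotropic, $\E[A_iA_i^\top]=I_n$, hence $\E[A^\top A]=\sum_{i=1}^p\E[A_iA_i^\top]=p\,I_n$. The nonzero squared singular values of $A$ are exactly the eigenvalues of $A^\top A$, so it is enough to show that, with the stated probability, $\big\|\tfrac1p A^\top A-I_n\big\|\le\delta$ with $\delta:=t\sqrt{N/p}$: indeed, when $\delta\le 1$ this confines every eigenvalue of $\tfrac1p A^\top A$ to $[1-\delta,\,1+\delta]$, and the elementary inequalities $\sqrt{1+\delta}\le 1+\delta/2$ and $\sqrt{1-\delta}\ge 1-\delta$ then yield $\sqrt p-t\sqrt N\le s_{\min}(A)\le s_{\max}(A)\le\sqrt p+t\sqrt N$. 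Outside the regime $\delta\le 1$, i.e.\ $t>\sqrt{p/N}$, the claimed lower bound $\sqrt p-t\sqrt N$ is negative and the statement is vacuous, so it suffices to handle $t\le\sqrt{p/N}$.

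Next I would set up a matrix concentration bound. Write $\tfrac1p A^\top A-I_n=\tfrac1p\sum_{i=1}^p X_i$ with $X_i:=A_iA_i^\top-I_n$, a sum of independent, mean-zero, symmetric random matrices. Two moment estimates are needed. For the range, $\|X_i\|\le\|A_i\|_2^2+1\le N+1$ almost surely. For the variance, using $A_iA_i^\top\succeq 0$, $\|A_i\|_2^2\le N$, and isotropy,
\begin{align}
  \E[X_i^2]=\E\big[\|A_i\|_2^2\,A_iA_i^\top\big]-I_n\preceq N\,\E[A_iA_i^\top]-I_n=(N-1)I_n\preceq N I_n,
\end{align}
so $\big\|\sum_{i=1}^p\E[X_i^2]\big\|\le pN$. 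The non-commutative (matrix) Bernstein inequality then gives, for every $s\ge 0$ and an absolute constant $c>0$,
\begin{align}
  \Pr\Big[\big\|\textstyle\sum_{i=1}^p X_i\big\|\ge s\Big]\le 2n\exp\!\Big(\frac{-c\,s^2}{pN+Ns}\Big).
\end{align}

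Choosing $s=t\sqrt{pN}$ and using $t\le\sqrt{p/N}$ (so that $Ns\le pN$) makes the exponent at least $(c/2)\,t^2$, so the failure probability is at most $2n\exp(-c't^2)$. On the complementary event $\big\|\tfrac1p A^\top A-I_n\big\|\le t\sqrt{N/p}=\delta\le 1$, and feeding this into the reduction from the first paragraph proves the proposition after renaming the constant.

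The step I expect to require the most care is the matrix concentration bound. If one is unwilling to invoke matrix Bernstein as a black box, one must instead reproduce Vershynin's original argument: symmetrize the sum, apply Rudelson's lemma (equivalently, the non-commutative Khintchine inequality) to control $\E\big\|\tfrac1p\sum_i X_i\big\|$ in expectation, and then upgrade the expectation bound to a tail bound by a concentration-of-measure step. Everything else is bookkeeping; the only mild subtlety, the interaction between the $\delta$ and $\delta^2$ scales that appears in the general statement, is sidestepped here because the bound is non-vacuous --- and is only ever used --- in the regime $\delta\le 1$.
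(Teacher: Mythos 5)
You are right that the paper offers no proof of this proposition at all: it is quoted as Theorem~5.41 of Vershynin's notes, so the ``paper's proof'' is the citation, and your reconstruction follows essentially the same route as Vershynin's own argument (reduce to an operator-norm deviation bound for the Gram matrix about $p I_n$, then apply matrix Bernstein to $\sum_i (A_iA_i^\top - I_n)$ with range $O(N)$ and variance $\big\|\sum_i \E X_i^2\big\| \le pN$). Your moment computations and the choice $s = t\sqrt{pN}$ in the regime $t \le \sqrt{p/N}$ are all correct.

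There is, however, one genuine flaw: the regime $\delta := t\sqrt{N/p} > 1$ is not vacuous. Only the lower bound $\sqrt{p}-t\sqrt{N} \le s_{\min}(A)$ trivializes there; the upper bound $s_{\max}(A) \le \sqrt{p}+t\sqrt{N}$ is still a nontrivial claim for every $t \ge 0$, and your Bernstein bound at scale $s = t\sqrt{pN}$ only yields exponent of order $t^2/(1+\delta)$, which is too weak when $\delta$ is large. The standard repair is exactly the ``mild subtlety'' you propose to sidestep: Vershynin's approximate-isometry lemma shows that $\big\|\tfrac1p A^\top A - I_n\big\| \le \max(\delta,\delta^2)$ already implies $(1-\delta)\sqrt{p} \le s_{\min}(A) \le s_{\max}(A) \le (1+\delta)\sqrt{p}$, via the elementary inequality $\max(|z-1|,|z-1|^2) \le |z^2-1|$ for $z\ge 0$. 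So when $\delta>1$ you should aim for the Gram deviation at the larger scale $s = p\,\delta^2 = t^2 N$, and matrix Bernstein in its linear (Poisson-tail) regime gives exponent at least
\begin{align}
  \frac{c\,s^2}{pN + Ns} \;=\; \frac{c\,t^4 N^2}{pN + t^2 N^2} \;\ge\; \frac{c\,t^2}{2}
  \qquad\text{since } t^2 N > p,
\end{align}
so the failure probability $2n\,e^{-c't^2}$ is preserved for all $t\ge 0$. With that one adjustment your argument reproduces the quoted theorem in full; as you observe, the paper's only use of the proposition (in Proposition~B.5, with $t \asymp \sqrt{\log n}$ and $N \asymp n \ll p/\log p$) lives in the regime $\delta \lesssim 1$, but the proposition as stated claims every $t \ge 0$, so the case split cannot simply be discarded.
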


The next proposition follows from Lemma A.3 in \cite{karoui2010SpectrumKernel}.
\begin{proposition}
	\label{prop:quadratic-form}
	Let $\{ z_i \}_{i=1}^n$ be i.i.d. random vectors in $\Reals^p$, whose entries are i.i.d., mean $0$, variance $1$ and have bounded $m$-absolute moments ($m>4$). Suppose that $\{ \Sigma_p \}$ is a sequence of positive semi-definite matrices whose operator norms are uniformly bounded and $n/p$ is asymptotically bounded. We have, 
	\begin{align}
		\max_{1\leq i \leq n} \left| z_i^\top \Sigma_p z_i - {\rm tr}(\Sigma_p) \right| \leq p^{\frac{1}{2}+ \frac{2}{m}} (\log p)^{0.51} \quad a.s.
	\end{align} 
\end{proposition}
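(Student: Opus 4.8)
\textbf{Proof proposal for Proposition~\ref{prop:quadratic-form}.}

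The plan is to reduce the claim to a concentration inequality for a single quadratic form, and then take a union bound over the $n$ indices. Fix one $i$ and write $Q_i := z_i^\top \Sigma_p z_i - {\rm tr}(\Sigma_p)$. Since $\E[z_i z_i^\top] = I_p$, we have $\E[Q_i] = 0$. The Hanson–Wright-type inequality for quadratic forms in random vectors with independent, mean-zero entries having bounded $m$-th absolute moments (this is exactly the content of Lemma A.3 in \cite{karoui2010SpectrumKernel}, whose hypotheses match ours: operator norms of $\Sigma_p$ uniformly bounded, $m>4$) gives a tail bound of the form $\Pr\big[ |Q_i| \geq t \big] \leq C\, t^{-m/2}\, p^{m/4}\,(\log p)^{\text{const}}$ — roughly speaking, the $m$-th moment of $Q_i$ is controlled by $p^{m/4}$ up to logarithmic and constant factors, so Markov's inequality at order $m/2$ (or $m/4$ in $Q_i$) yields polynomial decay in $t$. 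The key point is that the "fluctuation scale" of $Q_i$ is $\sqrt{p}$ (the Frobenius norm of $\Sigma_p$ is $O(\sqrt p)$), so choosing $t$ slightly larger than $\sqrt p$ makes the tail summable.

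First I would record the single-index estimate precisely: there is a constant $C$ (depending on the moment bound and $\sup_p \|\Sigma_p\|_{\rm op}$) so that for the threshold $t_p := p^{\frac12 + \frac2m}(\log p)^{0.51}$ we have $\Pr[ |Q_i| > t_p ] \leq C\, p^{-2}(\log p)^{-\delta}$ for some $\delta>0$; the extra $p^{2/m}$ in the exponent and the $(\log p)^{0.51}$ factor are precisely what is needed to beat both the $n \leq O(p)$ union-bound factor and to make the resulting sequence summable over $p$. Concretely, if the raw tail is $\asymp t^{-m/2} p^{m/4} \mathrm{polylog}$, then plugging $t = t_p$ gives $t_p^{-m/2} p^{m/4} = p^{-m/4 - 1}(\log p)^{-0.255 m}\, p^{m/4} = p^{-1}(\log p)^{-0.255m}$ times polylog; an additional slack power of $p$ can be extracted by being a touch more generous with the exponent, which is why the statement uses $\frac12 + \frac2m$ rather than the bare $\frac12$. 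Second, I would union-bound: since $n/p$ is bounded, $n \leq C' p$, so
\begin{align}
  \Pr\Big[ \max_{1\leq i\leq n} |Q_i| > t_p \Big] \leq n \cdot C\, p^{-2}(\log p)^{-\delta} \leq C'' p^{-1}(\log p)^{-\delta}\,,
\end{align}
which is summable in $p$ along the sequence (for $\delta$ bounded away from $0$, or with a slightly larger power of $p$ to spare). Third, I would invoke the Borel–Cantelli lemma: the events $\{\max_i |Q_i| > t_p\}$ occur only finitely often almost surely, which is exactly the a.s. statement $\max_{1\leq i\leq n}|Q_i| \leq t_p$ eventually, i.e. the claimed bound.

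The main obstacle is the first step — obtaining the single-quadratic-form tail bound with the right dependence on $p$ from only an $m$-th absolute moment hypothesis (rather than sub-Gaussian or sub-exponential tails). One cannot appeal to the exponential Hanson–Wright inequality here; instead one must bound $\E[|Q_i|^{m/2}]$ (or a comparable moment) via a decoupling / Rosenthal-type argument that separates the diagonal contribution $\sum_j (\Sigma_p)_{jj}(z_{ij}^2-1)$ from the off-diagonal bilinear part $\sum_{j\neq k}(\Sigma_p)_{jk} z_{ij} z_{ik}$, bounds each using the moment assumption and $\|\Sigma_p\|_{\rm op}, \|\Sigma_p\|_F$, and then applies Markov. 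This is precisely the computation carried out in Lemma A.3 of \cite{karoui2010SpectrumKernel}, so in the write-up I would cite it directly and only verify that our hypotheses (uniformly bounded operator norm, $n/p$ bounded, $m>4$) are the ones required there, then feed its conclusion into the union bound and Borel–Cantelli as above. A minor secondary point to check is that the per-$p$ failure probability is genuinely summable and not merely $o(1)$; this is handled by the slack $p^{2/m}(\log p)^{0.51}$ built into the threshold, which I would make explicit in the displayed chain of inequalities.
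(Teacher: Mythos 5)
The paper does not actually prove this proposition: it is quoted directly as (essentially) Lemma A.3 of \cite{karoui2010SpectrumKernel}, so your fallback plan of citing that lemma and checking its hypotheses is exactly what the paper does, and that part is fine. The trouble is with the self-contained chain you build around it. Your own arithmetic gives, from $\E|Q_i|^{m/2}\lesssim p^{m/4}$ and Markov at the threshold $t_p=p^{1/2+2/m}(\log p)^{0.51}$, a per-index tail of order $p^{-1}(\log p)^{-0.255m}$ --- not the $p^{-2}(\log p)^{-\delta}$ you then assert. The ``additional slack power of $p$'' is not available: the $2/m$ in the exponent is precisely what produced the single factor $p^{-1}$, and with only $m$ absolute moments on the entries you cannot push Markov beyond order $m/2$. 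Consequently, after the union bound over $n\asymp p$ indices the failure probability is only of order $(\log p)^{-0.255m}$, which tends to zero (so you do get the bound in probability) but is not summable over the sequence, so the Borel--Cantelli step as you invoke it does not yield the almost-sure statement at this threshold.

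To close the gap you must either enlarge the threshold to $p^{1/2+2/m+\delta}$ for some fixed $\delta>0$ (then Markov at order $m/2$ does give summable tails), or do what actually makes the $p^{2/m}$ and $(\log p)^{0.51}$ factors appear: truncate the entries at level roughly $p^{2/m}$ (justified almost surely by Borel--Cantelli using the $m$-th moment across the $\asymp np$ entries), and then apply moment bounds of arbitrarily high order to the truncated quadratic form, for which the $(\log p)^{0.51}$ overshoot of $\sqrt{p}$ produces genuinely summable tails. That is the substance of the cited lemma; if you are not reproducing such an argument, the cleanest correct write-up is the paper's own: cite Lemma A.3 of \cite{karoui2010SpectrumKernel} for the full max-over-$i$ almost-sure statement and verify its hypotheses (i.i.d.\ entries, $m>4$ moments, uniformly bounded $\|\Sigma_p\|_{\rm op}$, $n/p$ bounded), with no additional union bound or Borel--Cantelli step of your own.
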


The following propositions are useful calculations needed for our proofs. The following proposition follows from rotational invariance of Gaussian and standard concentration of $\chi^2$ random variables.
\begin{proposition}
	\label{prop:concentration}
	Let $G \in \mathbb{R}^{n\times d}$ with each entry i.i.d. $\cN(0, 1/d)$, then for any fixed vector $v \in \Reals^n$
	\begin{align}
		v^\top [G G^\top]^{-1} v \stackrel{dist.}{=}  \| v \|^2 \cdot \frac{d}{\chi^2_{d-n+1}} \enspace.
	\end{align}
	In the case with $d/n=\psi > 1$, with probability at least $1 - 2\exp(-t)$, 
	\begin{align}
		 (1-\psi^{-1}) - 2\sqrt{\psi^{-1} (1-\psi^{-1}) \frac{t}{n}} \leq  \frac{\|  v\|^2}{ v^\top [G G^\top]^{-1} v} \leq (1-\psi^{-1}) + 2\sqrt{\psi^{-1} (1-\psi^{-1}) \frac{t}{n}} + 2 \psi^{-1} \frac{t}{n} \enspace.
	\end{align}
\end{proposition}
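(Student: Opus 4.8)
The plan is to prove the distributional identity by two applications of Gaussian rotational invariance, and then reduce the concentration statement to a textbook $\chi^2$ tail bound.

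\textbf{Reduction to a unit vector and Schur complement.} Write $G = d^{-1/2}\widetilde G$ with $\widetilde G$ having i.i.d.\ standard Gaussian entries; the law of $\widetilde G$ is invariant under $\widetilde G \mapsto U\widetilde G$ for any orthogonal $U\in\Reals^{n\times n}$, since $\mathrm{vec}(\widetilde G)$ is standard Gaussian and $U\otimes I_d$ is orthogonal. Picking $U$ with $Uv=\|v\|_2\,e_1$ and using $v^\top[GG^\top]^{-1}v = \|v\|_2^2\, e_1^\top\big[(UG)(UG)^\top\big]^{-1}e_1$, it suffices to identify the $(1,1)$ entry of $[GG^\top]^{-1}$. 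Partition $G = \begin{pmatrix} g_1^\top \\ G_2 \end{pmatrix}$ with $g_1\in\Reals^d$ the first row and $G_2\in\Reals^{(n-1)\times d}$ the rest. The block-inverse formula gives $e_1^\top[GG^\top]^{-1}e_1 = \big(\|g_1\|_2^2 - g_1^\top G_2^\top(G_2G_2^\top)^{-1}G_2 g_1\big)^{-1} = \big(g_1^\top(I-P)g_1\big)^{-1}$, where $P := G_2^\top(G_2G_2^\top)^{-1}G_2$ is the orthogonal projection onto the row space of $G_2$, which has dimension $n-1$ almost surely (as $d>n-1$).

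\textbf{The conditional $\chi^2$.} Conditionally on $G_2$, the matrix $I-P$ is a fixed orthogonal projection of rank $d-n+1$, and $\sqrt d\,g_1\sim\cN(0,I_d)$ is independent of $G_2$; applying rotational invariance once more, $d\,g_1^\top(I-P)g_1 \sim \chi^2_{d-n+1}$, and since this does not depend on $G_2$ the unconditional law is the same. Combining, $v^\top[GG^\top]^{-1}v \stackrel{dist.}{=} \|v\|_2^2\cdot d/\chi^2_{d-n+1}$, which is the first claim. (This is the classical fact that a diagonal entry of an inverse Wishart matrix is an inverse $\chi^2$; the Schur-complement argument above makes it self-contained.)

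\textbf{Concentration.} From the identity, $\|v\|_2^2\big/\big(v^\top[GG^\top]^{-1}v\big) \stackrel{dist.}{=} \chi^2_k/d$ with $k=d-n+1$. I would then invoke the Laurent--Massart tail bounds: for $W\sim\chi^2_k$ and $t\ge0$, $\Pr[W\le k-2\sqrt{kt}]\le e^{-t}$ and $\Pr[W\ge k+2\sqrt{kt}+2t]\le e^{-t}$. Substituting $d=\psi n$, so that $k/d = 1-\psi^{-1}+(\psi n)^{-1}\ge 1-\psi^{-1}$ and $t/d=\psi^{-1}t/n$, and using $k/d\le 1$ to control the cross term $2\sqrt{(k/d)(t/d)}\le 2\sqrt{\psi^{-1}(1-\psi^{-1})t/n}$ after absorbing the lower-order $(\psi n)^{-1}$ correction, a union bound over the two tails yields the stated two-sided inequality with probability at least $1-2e^{-t}$.

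\textbf{Main obstacle.} Nothing here is deep; the one point that needs care is the conditional-$\chi^2$ step, where one must note that $I-P$ is measurable with respect to $G_2$ alone, hence conditionally a deterministic rank-$(d-n+1)$ projection, so that the second appeal to Gaussian rotational invariance is legitimate. The remaining work — matching the exact degrees of freedom $d-n+1$ against the clean constants $\psi^{-1}(1-\psi^{-1})$ in the display — is a routine estimate, the $O\big((\psi n)^{-1}\big)$ slack being of lower order than the displayed fluctuation terms.
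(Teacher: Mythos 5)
Your argument is essentially the paper's own proof, only with the details filled in: the paper likewise reduces $v^\top[GG^\top]^{-1}v$ to the $(1,1)$ entry of the inverse by Gaussian rotational invariance, asserts that this entry is distributed as $d/\chi^2_{d-n+1}$ (your Schur-complement/conditional-projection step is exactly the ``easy to see'' part), and then cites the Laurent--Massart $\chi^2$ tails. The only wrinkle --- that $k/d = 1-\psi^{-1}+1/d$ slightly exceeds $1-\psi^{-1}$, so matching the displayed constants requires absorbing an $O(1/d)$ correction --- is present in the paper's statement as well and is handled there no more carefully than in your sketch.
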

\begin{proof}
	Due to rotational invariance, $v^\top [G G^\top]^{-1} v \stackrel{dist.}{=}  \| v \|^2 K_{11}$ where $K_{11}$ is the top-left element of the matrix $[GG^\top]^{-1}$. It is easy to see that $K_{11}$ follows the same distribution as $d/\chi^2_{d-(n-1)}$. The proof completes using the standard concentration of $\chi^2$ random variables (see for instance,  Massart-Laurent).
\end{proof}

The following proposition controls some matrix quantities needed in our analysis.
\begin{proposition}
	\label{prop:concentration-1}
	Let $E \in \Reals^{n \times p}$ be random matrix with i.i.d. entries, zero-mean and bounded-$m$ moments. $E_{i,\cdot} \in \Reals^p$ denotes the $i$-th row of $E$, and $E_{\cdot,j} \in \mathbb{R}^n$ denotes the $j$-th column of $E$.
	The following bounds hold almost surely if $m> 4$
	\begin{align}
		&\max_{1\leq i\leq n} \left| E_{i,\cdot}^\top \Sigma_p E_{i,\cdot}  -  {\rm tr}(\Sigma_p)  \right| \leq p^{\frac{1}{2}+ \frac{2}{m}} (\log p)^{0.51}\enspace, \\
		&\max_{1\leq j\leq p} \left| E_{\cdot,j}^\top  E_{\cdot,j}  -  n  \right| \leq p^{\frac{1}{2}+ \frac{2}{m}} (\log p)^{0.51}\enspace, \\
		&\lambda_1(E \Sigma_p E^\top ) \leq \lambda_1(\Sigma_p) \cdot \big( \sqrt{p} + \sqrt{n (\log n)} + \sqrt{p^{\frac{1}{2} + \frac{2}{m}} (\log p)^{0.51} (\log n)} \big)^2 \leq 1.5 \lambda_1(\Sigma_p) \cdot p \enspace, \\
		&\lambda_n(E \Sigma_p E^\top ) \geq \lambda_p(\Sigma_p) \cdot \big( \sqrt{p} - \sqrt{n (\log n)} - \sqrt{p^{\frac{1}{2} + \frac{2}{m}} (\log p)^{0.51} (\log n)}  \big)^2 \geq 0.5  \lambda_p(\Sigma_p) \cdot p \enspace.
	\end{align}
\end{proposition}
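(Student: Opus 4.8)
The four bounds split into two quadratic-form estimates (the first two bullets) and two extreme-eigenvalue estimates (the last two), with the latter built on the former. The first bullet is exactly Proposition~\ref{prop:quadratic-form} applied with $z_i = E_{i,\cdot}$: the rows of $E$ are i.i.d.\ vectors in $\Reals^p$ with i.i.d.\ mean-zero, unit-variance, bounded-$m$-th-moment entries, $\Sigma_p$ has bounded operator norm, and $n/p\to 0$ is asymptotically bounded. For the second bullet I would apply the same device to the $p$ columns $E_{\cdot,j}\in\Reals^n$ with $\Sigma = I_n$, writing $E_{\cdot,j}^\top E_{\cdot,j} - n = \sum_{i=1}^n (E_{ij}^2 - 1)$; the truncation-plus-union-bound argument underlying Proposition~\ref{prop:quadratic-form} controls $\max_{1\le j\le p}\big|\sum_i (E_{ij}^2-1)\big|$ by a Bernstein term of order $\sqrt{n\log p}$ plus a truncation remainder of order $\sqrt n\,(\log p)^{O(1)}$, and since $m>4$ and $p/\log p > Cn$ (Assumption~\ref{asmp:overparam}) both are comfortably dominated by $p^{\frac12+\frac2m}(\log p)^{0.51}$ once $n$ is large; Borel--Cantelli then upgrades this to the claimed almost-sure bound.

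For the last two bullets, start from $\lambda_p(\Sigma_p) I_p \preceq \Sigma_p \preceq \lambda_1(\Sigma_p) I_p$; conjugating by $E$ gives $\lambda_p(\Sigma_p)\,EE^\top \preceq E\Sigma_p E^\top \preceq \lambda_1(\Sigma_p)\,EE^\top$, hence $\lambda_1(E\Sigma_p E^\top)\le\lambda_1(\Sigma_p)\,s_{\max}(E)^2$ and $\lambda_n(E\Sigma_p E^\top)\ge\lambda_p(\Sigma_p)\,s_{\min}(E)^2$, where $s_{\min}(E) = s_n(E) = s_{\min}(E^\top)$ because $E$ has $n\le p$ rows. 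It then remains to control the extreme singular values of $E^\top$, a $p\times n$ matrix whose rows $E_{\cdot,j}$ are independent (entries of $E$ i.i.d.) and isotropic ($\E[E_{\cdot,j}E_{\cdot,j}^\top] = I_n$). On the almost-sure event supplied by the second bullet, $\|E_{\cdot,j}\|_2^2 \le N := n + p^{\frac12+\frac2m}(\log p)^{0.51}$ for every $j$, so after a standard truncation making this row-norm bound deterministic, Proposition~\ref{prop:singular-value} yields $\sqrt p - t\sqrt N \le s_{\min}(E^\top) \le s_{\max}(E^\top) \le \sqrt p + t\sqrt N$ with probability at least $1 - 2n\exp(-ct^2)$. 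Choosing $t$ a fixed multiple of $\sqrt{\log n}$ makes this summable in $n$, and $t\sqrt N \le \sqrt{n\log n} + \sqrt{p^{\frac12+\frac2m}(\log p)^{0.51}\log n}$ after absorbing the constant --- precisely the correction appearing in the third and fourth bullets --- so Borel--Cantelli gives the almost-sure middle expressions. Finally, since $m>4$ forces $\tfrac12+\tfrac2m<1$ and Assumption~\ref{asmp:overparam} forces $n\le p/(C\log p)$, both correction terms are $o(\sqrt p)$, so $(\sqrt p + \mathrm{corrections})^2 \le 1.5\,p$ and $(\sqrt p - \mathrm{corrections})^2 \ge 0.5\,p$ eventually, giving the right-most inequalities.

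The eigenvalue algebra is routine; the main obstacle is the passage from bounded-$m$-th-moment entries to \emph{quantitative, almost-sure} control. Proposition~\ref{prop:singular-value} literally requires a deterministic row-norm bound, and the second bullet (equivalently Proposition~\ref{prop:quadratic-form} in this regime) requires balancing the truncation level, the Bernstein variance proxy, and the union-bound cost over $p$ quadratic forms so that the residual failure probabilities are summable in $n$ under only $m$-th-moment control with $m>4$; in particular one must verify that the stated rate $p^{1/2+2/m}(\log p)^{0.51}$ is loose enough to swallow every truncation remainder once $p\gg n$, which is the technical heart of the proposition.
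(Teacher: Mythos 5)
Your proposal is correct and follows essentially the same route as the paper: the first two bounds come from the quadratic-form concentration of Proposition~\ref{prop:quadratic-form}, and the eigenvalue bounds come from sandwiching $E\Sigma_p E^\top$ between multiples of $EE^\top$ and applying Proposition~\ref{prop:singular-value} to $E^\top$ with exactly the paper's choices $t \asymp \sqrt{\log n}$ and $N = n + p^{\frac12+\frac2m}(\log p)^{0.51}$, followed by Borel--Cantelli and the observation that the corrections are $o(\sqrt p)$ since $m>4$ and $p/\log p > Cn$. You are in fact somewhat more explicit than the paper's two-line proof about the column-wise union bound behind the second bullet and the truncation needed to make the row-norm bound deterministic before invoking Proposition~\ref{prop:singular-value}.
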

\begin{proof}
	This is a direct implication of Proposition~\ref{prop:singular-value} and Proposition~\ref{prop:quadratic-form} with the choice $t = C (\log n)^{0.5}$ ($C>0$ large enough) and $N = n + p^{\frac{1}{2} + \frac{2}{m}} (\log p)^{0.51}$. Notice here that $p^{\frac{1}{2} + \frac{2}{m}} (\log p)^{0.51} \ll p$ for $m>4$.
\end{proof}

\begin{proposition}
	\label{prop:concentration-2}
	Consider the same setting as in Proposition~\ref{prop:concentration-1}.
	If assumed in addition that $m \geq 8$, then the following bounds hold almost surely for a fixed $v \in \Reals^p$
	\begin{align}
		\left| \sum_{i=1}^n \langle v, E_{i, \cdot} \rangle^2 - n \cdot \| v \|^2 \right| \leq n^{0.8} \cdot \| v \|^2 \enspace,\\
		\left| \sum_{i=1}^n \langle v, E_{i, \cdot} \rangle \right| \leq  n^{0.8} \cdot \| v \| \enspace.
	\end{align}
\end{proposition}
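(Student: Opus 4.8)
The plan is to reduce both inequalities to a rate-of-convergence statement for sums of i.i.d.\ scalars and then close with the Borel--Cantelli lemma; since $v$ is deterministic no union bound over $v$ is needed, and, as elsewhere in this appendix, ``almost surely'' is read as ``for all sufficiently large $n$, with probability one'' (no bound of this form can hold for every $n$, since the left-hand sides are unbounded for small $n$). By homogeneity of the two bounds we may assume $\|v\|_2 = 1$. Put $Z_i := \langle v, E_{i,\cdot}\rangle = \sum_{j=1}^p v_j E_{ij}$; since the rows of $E$ are independent and its entries are i.i.d.\ with mean $0$ and variance $1$, the $Z_i$ are i.i.d.\ with $\E Z_i = 0$ and $\E Z_i^2 = \|v\|_2^2 = 1$. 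The first step is a uniform moment bound on $Z_i$: applying Rosenthal's inequality to the independent, mean-zero summands $\{v_j E_{ij}\}_{j=1}^p$, together with $\|v\|_m \le \|v\|_2 = 1$ (valid for $m \ge 2$), gives $\E|Z_i|^m \le C_m\big(\|v\|_2^m + \E|E_{11}|^m\,\|v\|_m^m\big) \le C_m\big(1 + \E|E_{11}|^m\big) =: \bar C$, a constant depending only on $m$ and the entrywise moment bound, not on $v$ (so the argument also covers $v = v^{(n)}$ depending on $n$).

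For the linear estimate, apply Rosenthal again, now to the i.i.d.\ mean-zero $Z_1,\dots,Z_n$: $\E\big|\sum_{i=1}^n Z_i\big|^m \le C_m\big(n^{m/2} + n\bar C\big) \le C_m' n^{m/2}$. Markov's inequality then gives $\Pr\big[\big|\sum_{i=1}^n Z_i\big| > n^{0.8}\big] \le C_m' n^{m/2 - 0.8m} = C_m' n^{-0.3m}$, which is summable in $n$ whenever $0.3m > 1$, in particular for $m \ge 8$, so Borel--Cantelli yields $\big|\sum_{i=1}^n Z_i\big| \le n^{0.8}$ almost surely for all large $n$; rescaling by $\|v\|$ gives the second claimed bound.

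For the quadratic estimate, set $W_i := Z_i^2 - 1$, which are i.i.d.\ with $\E W_i = 0$ and, by the first step, $\E|W_i|^{m/2} \le C_m(1 + \E|Z_i|^m) < \infty$ --- a finite moment of order $q := m/2 \ge 4$. Rosenthal applied to $\sum_{i=1}^n W_i$ gives $\E\big|\sum_{i=1}^n W_i\big|^q \le C_q\big(n^{q/2}(\var W_1)^{q/2} + n\E|W_1|^q\big) \le C_q' n^{q/2} = C_q' n^{m/4}$, and Markov gives $\Pr\big[\big|\sum_{i=1}^n W_i\big| > n^{0.8}\big] \le C_q' n^{m/4 - 0.4m} = C_q' n^{-0.15m}$, which is summable for $m \ge 8$ (indeed for $m > 20/3$); a second application of Borel--Cantelli yields $\big|\sum_{i=1}^n Z_i^2 - n\big| \le n^{0.8}$ almost surely for all large $n$, and rescaling by $\|v\|^2$ finishes the proof. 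The steps are routine; the only point that needs care is that squaring halves the available moment order, so the quadratic case needs $q = m/2$ finite moments of $W_i$ and a summability exponent $0.15m > 1$. This is precisely where the standing hypothesis $m \ge 8$ (matching Assumption~\ref{asmp:moments}) enters, with a comfortable margin, whereas the linear case only requires $0.3m > 1$.
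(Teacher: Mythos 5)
Your proof is correct and follows essentially the same route as the paper's: a moment bound for the i.i.d.\ variables $\langle v, E_{i,\cdot}\rangle$ (respectively their centered squares), Markov's inequality at threshold $n^{0.8}\|v\|$ (respectively $n^{0.8}\|v\|^2$) giving a tail probability summable in $n$, and the Borel--Cantelli lemma. The only difference is cosmetic: the paper works with fourth moments directly in both cases---its bound $\E\big[(\sum_i \langle v,E_{i,\cdot}\rangle)^4\big] \le C\big(n\,\E[\langle v,E_{1,\cdot}\rangle^4] + n^2(\E[\langle v,E_{1,\cdot}\rangle^2])^2\big)$ is just Rosenthal at order $4$, and the hypothesis $m \ge 8$ enters there exactly where your $q = m/2 \ge 4$ does---whereas your general-order Rosenthal argument yields faster (but not needed) polynomial tail decay.
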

\begin{proof}[of Proposition~\ref{prop:concentration-2}]
	We start with the second statement. Define i.i.d. random variables $\bw_i := \langle v, E_{i, \cdot} \rangle$, then $\E[\bw_i] = 0$ and 
	\begin{align}
		\Pr\left( \big| \sum_{i=1}^n \bw_i \big| \geq t \right) &\leq \frac{\E \left[  \big(\sum_{i=1}^n \bw_i \big)^4 \right]}{t^4} \leq C \frac{ n \E[\bw_1^4] + n^2 (\E[\bw_1^2])^2 }{t^4} \\
		\Pr\left( \big| \sum_{i=1}^n \bw_i \big| \geq n^{0.8} \cdot \| v \| \right) & \leq C' \frac{n^2\|v \|^4}{t^4} \leq C' \frac{1}{n^{1.2}} \enspace,
	\end{align}
	with choice $t = n^{0.8}\| v \|$. Since the tail probability is summable w.r.t. $n$, Borell-Cantelli Lemma proved the almost surely statement.

	Define $\bw_i^2 = \langle v, E_{i, \cdot} \rangle^2$, then $\E [\bw_i^2] = \| v\|^2$ and 
	\begin{align}
		\E[\bw_i^4]  = \E\left[ \big( \sum_{j\in [p]} v_j E_{i,j}\big)^4 \right] \leq C \big( \sum_{j} v_j^4 + \sum_{j \neq j'} v_{j}^2 v_{j'}^2 \big) \leq C \| v \|^4 \enspace.
	\end{align}
	By Chebyshev's inequality
	\begin{align}
		\Pr\left( \big| \sum_{i=1}^n \bw_i^2 - \E[\bw_i^2] \big| \geq t \right) \leq \frac{\E \left[  \big(\sum_{i=1}^n \bw_i^2 - \E[\bw_i^2] \big)^4 \right]}{t^4} \enspace.
	\end{align}
	It is easy to see that $$\E \left[ \big( \sum_{i=1}^n \bw_i^2 - \E[\bw_i^2] \big)^4 \right] \leq C \left( n \E[ (\bw_i^2 - \E[\bw_i^2])^4 ] + n^2  \big( \var[\bw_i^2] \big)^2 \right) \leq C' n^2 \| v \|^8,$$
	as $\var[\bw_i^2] \leq \E[\bw_i^4]$ and $\E[ (\bw_i^2 - \E[\bw_i^2])^4 ] \leq C \| v \|^8$ if $E_{ij}$ has uniformly bounded $m$-th moment with $m\geq 8$.
	Now choose $t = n^{0.8} \| v\|^2$, we have
	\begin{align}
		\Pr\left( \sum_{i=1}^n (z_i - \E[z_i]) \geq t \right) \leq C' \frac{n^2 \| v \|^8}{(n^{0.8} \| v\|^2)^4} \leq C' \frac{1}{n^{1.2}}
	\end{align}
	The proof is again complete using the Borell-Cantelli Lemma.
\end{proof}

\section{Proofs for Section~\ref{sec:norm-growth}}

In this section, we use $\Phi_n \in \mathbb{R}^{n \times p}, X_n \in \mathbb{R}^{n \times d}$ to denote the embedded feature matrix and the original design matrix respectively. $Y_n \in \mathbb{R}^n$ the response vector, and $Y_n \circ X_n = [y_1 x_1, \ldots, y_n x_n]^\top \in \mathbb{R}^{n \times d}$.

\begin{proof}[of Lemma~\ref{lem:GMM_linear}]
	It can be verified that, based on the definition of Hadamard product
	\begin{align}
		\| \widehat{f}_{S_n} \|_K^2 = \mathbf{1}^\top [ (Y_n \circ X_n) (Y_n \circ X_n)^\top ]^{-1} \mathbf{1} \enspace.
	\end{align}
	Due to the rotational invariance of Gaussian, the distribution of $\| \widehat{f}_{S_n} \|_K^2$ stays unchanged under the actions of the orthogonal groups on $\Reals^d$. Therefore one can without loss of generality assume $\theta_\star =  e_1$, then
	\begin{align}
		Y_n \circ X_n = [\mu \mathbf{1} + g, Z ], ~~ g\in \Reals^{n}, Z \in \Reals^{n\times (d-1)}
	\end{align}
	where each entry of $g, Z$ is drawn i.i.d. from $\cN(0, 1/d)$.
	
	Now, by the Woodbury formula
	\begin{align}
		\label{eqn:woodbury}
		[ (Y_n \circ X_n) (Y_n \circ X_n)^\top ]^{-1} = [Z Z^\top]^{-1} - \frac{[Z Z^\top]^{-1} (\mu \mathbf{1} + g) (\mu \mathbf{1} + g)^\top  [Z Z^\top]^{-1} }{ 1 + (\mu \mathbf{1} + g)^\top [Z Z^\top]^{-1} (\mu \mathbf{1} + g) }\enspace,
	\end{align}
	thus
	\begin{align}
		\| \widehat{f}_{S_n} \|_K^2 &\stackrel{dist.}{=} \mathbf{1}^\top [Z Z^\top]^{-1} \mathbf{1} - \frac{\big( \mathbf{1}^\top [Z Z^\top]^{-1} (\mu \mathbf{1} + g)  \big)^2}{1+ (\mu \mathbf{1} + g)^\top [Z Z^\top]^{-1} (\mu \mathbf{1} + g)} \\
		&= \frac{\mathbf{1}^\top [Z Z^\top]^{-1} \mathbf{1} \cdot \big( 1+ g^\top [Z Z^\top]^{-1} g \big) - \big(  \mathbf{1}^\top [Z Z^\top]^{-1} g \big)^2}{1+ (\mu \mathbf{1} + g)^\top [Z Z^\top]^{-1} (\mu \mathbf{1} + g)} \enspace.
	\end{align}
	Using Proposition~\ref{prop:concentration}, we have that with probability at least $1-4\exp(-t)$
	\begin{align}
		& \quad (\mu \mathbf{1} + g)^\top [Z Z^\top]^{-1} (\mu \mathbf{1} + g) \\
		& \geq \big[\mu^2 n -2 \mu \sqrt{\psi^{-1}t} + \psi^{-1}(1-2\sqrt{\frac{t}{n}}) \big]  \big( \frac{\psi}{\psi-1} \frac{1}{1+ 2 \sqrt{ \frac{1}{\psi-1} \frac{t}{n} } + 2\frac{1}{\psi-1} \frac{t}{n}   }  \big) \\
		&\geq n \mu^2 \frac{\psi}{\psi-1}  (1 - O(\sqrt{t/n} )) \enspace.
	\end{align}
	Similarly, with probability at least $1 - 5\exp(-t)$,
	\begin{align}
		g^\top [Z Z^\top]^{-1} g &\leq \psi^{-1} \big[1 + 2\sqrt{\frac{t}{n}} + 2\frac{t}{n}\big] \big( \frac{\psi}{\psi-1} \frac{1}{1- 2 \sqrt{ \frac{1}{\psi-1} \frac{t}{n} }  } \big) \\
		 &\leq \frac{1}{\psi-1} ( 1 + O(\sqrt{t/n} )) \enspace,
	\end{align}
	and
	\begin{align}
		\mathbf{1}^\top [Z Z^\top]^{-1} \mathbf{1} &\leq n   \big( \frac{\psi}{\psi-1} \frac{1}{1- 2 \sqrt{ \frac{1}{\psi-1} \frac{t}{n} }  } \big) \\
		&\leq n \frac{\psi}{\psi-1} ( 1 + O(\sqrt{t/n} )) \enspace.
	\end{align}
	Putting the above three estimates together, with probability at least $1 - 9\exp(-t)$, we have
	\begin{align}
		\| \widehat{f}_{S_n} \|_K^2 \leq \frac{\psi}{(\psi-1)\mu^2} (1 + O(\sqrt{t/n} )) \enspace.
	\end{align}
	Take $t = 11 \log n$ yields 
	\begin{align}
		\limsup_{n\rightarrow \infty} ~~\| \widehat{f}_{S_n} \|_K^2 \leq \frac{\psi}{(\psi-1)\mu^2} ,~~ \text{a.s.}
	\end{align}
	by using the Borell-Cantelli Lemma.
\end{proof}

\begin{proof}[of Lemma~\ref{lem:K-upp-low}]
	For the lower bound, let's first condition on $x_i = x$ and project $X_{S_n\backslash i}$ to the normal vector $\bar{x} = x/\| x \|$, which denoted as $v = X_{S_n\backslash i} \Pi_{x} \in \Reals^n$. We know
	\begin{align}
	& \quad x^\top x - x^\top X_{S_n\backslash i}^\top [X_{S_n\backslash i} X_{S_n\backslash i}^\top ]^{-1} X_{S_n\backslash i} x\\
	 &= \| x \|^2 \left(1 - v^\top [v v^\top + X_{S_n\backslash i} \Pi_{x}^{\perp} X_{S_n\backslash i}^\top]^{-1}  v \right)\\
	& = \| x \|^2 \left( 1-  \big[v^\top [X_{S_n\backslash i} \Pi_{x}^{\perp} X_{S_n\backslash i}^\top]^{-1} v - \frac{(v^\top [X_{S_n\backslash i} \Pi_{x}^{\perp} X_{S_n\backslash i}^\top]^{-1} v)^2}{1+v^\top [X_{S_n\backslash i} \Pi_{x}^{\perp} X_{S_n\backslash i}^\top]^{-1} v} \big] \right) \\
	&= \| x \|^2 \frac{1}{1+ v^\top [X_{S_n\backslash i} \Pi_{x}^{\perp} X_{S_n\backslash i}^\top]^{-1} v} \enspace.
	\end{align}
	Now we will upper bound $v^\top [X_{S_n\backslash i} \Pi_{x}^{\perp} X_{S_n\backslash i}^\top]^{-1} v$. Define a vector $u$, which is the projection of the matrix $X_{S_n\backslash i}$ to the vector space
	\begin{align}
		\tilde\theta := \theta_\star - \langle \theta_\star, \bar{x} \rangle \bar{x} \in \Reals^d
	\end{align}
	\begin{align}
		u = X_{S_n\backslash i} \Pi_{\tilde \theta} \enspace.
	\end{align}
	By construction, $\tilde\theta \perp x$. We can verify that
	\begin{align}
		v^\top [X_{S_n\backslash i} \Pi_{x}^{\perp} X_{S_n\backslash i}^\top]^{-1} v &\stackrel{dist.}{=} v^\top [u u^\top + Z Z^\top]^{-1} v^\top \\
		& = v^\top [ZZ^\top]^{-1} v - \frac{ (v^\top [ZZ^\top]^{-1} u )^2}{1+ u^\top [ZZ^\top]^{-1} u}
	\end{align}
	where $(v, u)$ is independent of $Z$ and that $Z \in \Reals^{n \times (d-2)}$ has i.i.d. entries $Z_{ij} \sim \cN(0, 1/d)$.

	Let's analyze the distribution of each entry of $u, v \in \Reals^n$ conditioned on $x$. Define $r = \langle \bar{x}, \theta_\star\rangle$
	\begin{align}
		v_i = \mu r  y_i + \frac{1}{\sqrt{d}}  g_i^{(1)}, ~~g_i^{(1)} \sim \cN(0, 1) \enspace, \\
		u_i = \frac{1}{\sqrt{1-r^2}} (\mu y_i - r^2) + \frac{1}{\sqrt{d}}  g_i^{(2)}, ~~g_i^{(2)} \sim \cN(0, 1) \enspace,
	\end{align}
	with $g^{(1)}, g^{(2)}, Z$ mutually independent.
	Using the Proposition~\ref{prop:concentration}, we have
		\begin{align}
			(v^\top [ZZ^\top]^{-1} v) (u^\top [ZZ^\top]^{-1} u) -  (v^\top [ZZ^\top]^{-1} u )^2 \leq \frac{\mu^2 r^6}{1-r^2} n^2 \frac{\psi}{\psi-1} (1 + O(\sqrt{\frac{\log n}{n}}))
		\end{align}
	and
	\begin{align}
		u^\top [ZZ^\top]^{-1} u \geq \mu^2 r^2 n \frac{\psi}{\psi-1}  (1 - O(\sqrt{\frac{\log n}{n}}))
	\end{align}
	Combining with the concentration bound on $r^2  =  \frac{\mu^2}{1+\mu^2} (1 + O(\sqrt{\frac{\log n}{n}}))$, we finish the proof.
\end{proof}

\begin{proof}[of Theorem~\ref{thm:general-cov}]
	 The proof proceeds in a similar way as in Lemma~\ref{lem:GMM_linear},
	 $$
	 \| \widehat f_{S_n} \|_K^2 = \mathbf{1}^\top [(Y_n \circ \Phi_n) (Y_n \circ \Phi_n)^\top ]^{-1} \mathbf{1} \enspace.
	 $$
	 Define the projection matrix $\Pi_{\theta_\star}$ and $\Pi_{\theta_\star}^\perp := I_p - \Pi_{\theta_\star}$ and $E = [\epsilon_{ij}] \in \Reals^{n \times p}$. With this notation we have
	 \begin{align}
	 	(Y_n \circ \Phi_n) (Y_n \circ \Phi_n)^\top &= (\mu \mathbf{1} \theta_\star^\top + E \Sigma^{1/2} )( \Pi_{\theta_\star} + \Pi_{\theta_\star}^\perp) (\mu \mathbf{1} \theta_\star^\top + E \Sigma^{1/2} )^\top \\
		&= (\mu \mathbf{1} + g)(\mu \mathbf{1} + g)^\top + ZZ^\top
	 \end{align}
	 with $Z := E \Sigma^{1/2}\Pi_{\theta_\star}^\perp \in \Reals^{n \times p}$ and $g := E \Sigma^{1/2} \theta_\star \in \Reals^{n}$
	 \begin{align}
	 	{\rm cov}(g_i) = \theta_\star^\top \Sigma \theta_\star \\
		{\rm cov}(Z_{i}) = \Pi_{\theta_\star}^\perp \Sigma \Pi_{\theta_\star}^\perp \\
		{\rm cov}(g_i, Z_i) = \theta_\star^\top \Sigma \Pi_{\theta_\star}^\perp
	 \end{align}
	 across $1\leq i\leq n$, $\{ g_i, Z_i \}$ are mutually independent.
	 Recall Equation~\eqref{eqn:woodbury}, we know
	 \begin{align}
	 	\| \widehat f_{S_n} \|_K^2 & = \mathbf{1}^\top [Z Z^\top]^{-1} \mathbf{1} - \frac{\big( \mathbf{1}^\top [Z Z^\top]^{-1} (\mu \mathbf{1} + g)  \big)^2}{1+ (\mu \mathbf{1} + g)^\top [Z Z^\top]^{-1} (\mu \mathbf{1} + g)} \\
		&= \frac{\mathbf{1}^\top [Z Z^\top]^{-1} \mathbf{1} \cdot \big( 1+ g^\top [Z Z^\top]^{-1} g \big) - \big(  \mathbf{1}^\top [Z Z^\top]^{-1} g \big)^2}{1+ (\mu \mathbf{1} + g)^\top [Z Z^\top]^{-1} (\mu \mathbf{1} + g)} \\
		&\leq \frac{\mathbf{1}^\top [Z Z^\top]^{-1} \mathbf{1} \cdot \big( 1+ g^\top [Z Z^\top]^{-1} g \big)}{1+ (\mu \mathbf{1} + g)^\top [Z Z^\top]^{-1} (\mu \mathbf{1} + g)} \enspace.
	 \end{align}
	 Now we are going to lower bound $(\mu \mathbf{1} + g)^\top [Z Z^\top]^{-1} (\mu \mathbf{1} + g)$ and upper bound $\mathbf{1}^\top [Z Z^\top]^{-1} \mathbf{1}$ and $g^\top [Z Z^\top]^{-1} g$. We can verify that from Proposition~\ref{prop:concentration-1}
	 \begin{align}
	 	1.5 \lambda_{1}(\Sigma) p \geq \lambda_{\max}(Z Z^\top) \geq \lambda_{\min} (Z Z^\top) \geq 0.5 \lambda_{p}(\Sigma)  p
	 \end{align}
	 almost surely when $p/\log p > n$ and $m>4$. Therefore, we have 
	 \begin{align}
	 	 \mathbf{1}^\top [Z Z^\top]^{-1} \mathbf{1} &\leq \frac{\| \mathbf{1} \|^2 }{\lambda_{\min} (Z Z^\top)} \leq c'  \frac{1}{\lambda_p(\Sigma)} \frac{n}{p} \enspace, \\
		  g^\top [Z Z^\top]^{-1} g &\leq \frac{\| g \|^2 }{\lambda_{\min} (Z Z^\top)} \leq c'  \frac{ \theta_\star^\top \Sigma \theta_\star }{\lambda_p(\Sigma)} \frac{n}{p}  \enspace,\\
		  (\mu \mathbf{1} + g)^\top [Z Z^\top]^{-1} (\mu \mathbf{1} + g) &\geq \frac{ \| \mu \mathbf{1} + g \|^2 }{\lambda_{\max}(Z Z^\top)}  = \frac{ \mu^2 n + \| g\|^2 + 2\mu \langle \mathbf{1}, g \rangle }{\lambda_{\max}(Z Z^\top)} \\
		  &\geq c'' \frac{\mu^2 + \theta_\star^\top \Sigma \theta_\star}{\lambda_1(\Sigma)} \frac{n}{p} \enspace.
	 \end{align}
	 In the above equations, we used the fact that almost surely on $g$
	 \begin{align}
	 	\big| \| g \|^2  - n \cdot \theta_\star^\top \Sigma \theta_\star \big| \leq n^{0.8} \cdot \theta_\star^\top \Sigma \theta_\star, \\
		\big| \langle \mathbf{1}, g \rangle \big| \leq n^{0.8} \cdot \sqrt{\theta_\star^\top \Sigma \theta_\star}
	 \end{align}
	 by using Proposition~\ref{prop:concentration-2} with $g_i = \langle \Sigma^{1/2} \theta_\star, E_{i, \cdot} \rangle$.

	By Proposition~\ref{prop:concentration-1}, we have
	 \begin{align}
	 	R_n^2 \leq  \max_i \| y_i \phi_{x_i} \|^2 \leq 2  \big( \mu^2 + {\rm tr}(\Sigma) +  p^{\frac{1}{2} + \frac{2}{m}} (\log p)^{0.51} \big) \enspace.
	 \end{align}
	 Putting things together, we know
	 \begin{align}
	 	 R_n^2 \frac{\| \widehat f_{S_n} \|_K^2}{n} \leq  c_1 \cdot \big( \mu^2 + {\rm tr}(\Sigma) +  p^{\frac{1}{2} + \frac{2}{m}} (\log p)^{0.51} \big)  \frac{ \frac{1}{\lambda_p(\Sigma)} \frac{n}{p} \left(  1 +  c_2 \cdot \frac{ \theta_\star^\top \Sigma \theta_\star }{\lambda_p(\Sigma)} \frac{n}{p}  \right)}{ 1+ c_3 \cdot \frac{\big( \mu^2 + \theta_\star^\top \Sigma \theta_\star  \big) }{\lambda_1(\Sigma)} \frac{n}{p} } \cdot  \frac{1}{n} \enspace.
	 \end{align}
\end{proof}

\begin{proof}[of Corollary~\ref{coro:general-ridge}]
	The proof follows exactly the same steps as in Theorem~\ref{thm:general-cov}, with $\lambda_j(\Sigma)$ substituted by $\lambda_j(\Sigma) + \lambda_\star/p$ (with $j$ being either $1$ or $p$), and noting that $R_{n, \lambda_\star}^2 \leq \lambda_\star + R_n^2$. Here we also used the fact that
	\begin{align}
		Y_n^\top \big[\lambda_\star I_n + \Phi_n \Phi_n^\top \big]^{-1} Y_n = \mathbf{1}^\top \big[ \lambda_\star I_n \circ (Y_n Y_n^\top) + (Y_n \circ \Phi_n ) (Y_n \circ \Phi_n)^\top \big]^{-1} \mathbf{1} \enspace,
	\end{align}
	and the Hadamard product of $\lambda_\star I_n \circ (Y_n Y_n^\top) = \lambda_\star I_n$.
	The rest of the proof follows as in proof of Theorem~\ref{thm:general-cov} with $ZZ^\top$ replaced by $\lambda_\star I_n + ZZ^\top$.
\end{proof}

\section{Calculations for Figure~\ref{fig:snr}}
\label{sec:auxilary}

Consider $m = \infty$, namely, the tail of $\epsilon_{ij}$ behaves sub-Gaussian.
	The diagram illustrating the following cases is in Figure~\ref{fig:snr}.
	
We use the asymptotic notations $a(n)\precsim b(n)$, $a(n)\succsim b(n)$ to denote the usual asymptotic ordering. We start with the interpolation case (no explicit regularization).
	\begin{itemize}
		\item Consider the spectral decay $\lambda_1(\Sigma) = \lambda_p(\Sigma)$ and signal strength $\mu^2 \geq {\rm tr}(\Sigma)$:
		$$
		\frac{1}{n} \sum_{i=1}^n \big( y_i - \widehat f_{S_{i-1}} (x_i) \big)^2 \precsim  \mu^2  \frac{ \frac{n}{p} }{ \mu^2 \frac{n}{p}} \cdot \frac{1}{n} \precsim \frac{1}{n} \;.
		$$
		This again confirms our $O(1)$ cumulative regret result as in Lemma~\ref{lem:GMM_linear}. 
		\item Consider the spectral decay $\lambda_i(\Sigma) = i^{-\alpha}, i\in [p]$ with $\alpha\in (0, 1/2)$ and signal strength $\mu^2 \succsim p^{1-\alpha}$ (note that ${\rm tr}(\Sigma) = p^{1-\alpha} \succsim \gamma_p =  p^{\frac{1}{2}} (\log p)^{0.51}$): 
		\begin{itemize}
			\item if $ p^{1-\alpha} \succsim n \succsim p^{\alpha},$
		then
		$$
		\frac{1}{n} \sum_{i=1}^n \big( y_i - \widehat f_{S_{i-1}} (x_i) \big)^2 \precsim  \big( \mu^2 + p^{1-\alpha} \big)  \frac{ p^{\alpha} \frac{n}{p} ( 1+ o(1))}{ \mu^2 \frac{n}{p}} \cdot \frac{1}{n} \precsim \frac{p^{\alpha}}{n} = o(1) 
		$$
		\item if $p \succsim n \succsim p^{1-\alpha},$ 
		then
		$$
		\frac{1}{n} \sum_{i=1}^n \big( y_i - \widehat f_{S_{i-1}} (x_i) \big)^2 \precsim  \big( \mu^2 + p^{1-\alpha} \big)  \frac{ \big(p^{\alpha} \frac{n}{p} \big)^2 }{ \mu^2 \frac{n}{p}} \cdot \frac{1}{n} \precsim  \frac{1}{p^{1-2\alpha}} = o(1)
		$$
		\end{itemize} 
		\item Consider the spectral decay $\lambda_i(\Sigma) = i^{-\alpha}, i\in [p]$ with $\alpha\in (0, 1/2)$ and signal strength $\max\{ p/n, p^{\alpha} \} \precsim \mu^2 \precsim p^{1-\alpha}$ (note that ${\rm tr}(\Sigma) = p^{1-\alpha} \succsim \gamma_p =  p^{\frac{1}{2}} (\log p)^{0.51}$): 
		\begin{itemize}
			\item if $n \precsim p^{1-\alpha}$, then
			$$
			\frac{1}{n} \sum_{i=1}^n \big( y_i - \widehat f_{S_{i-1}} (x_i) \big)^2 \precsim p^{1-\alpha}  \frac{ p^{\alpha} \frac{n}{p} ( 1+ o(1))}{ \mu^2 \frac{n}{p}} \cdot \frac{1}{n}  \precsim \frac{p/n}{\mu^2} = o(1)
			$$
			\item if $n \succsim p^{1-\alpha}$, then
			$$
			\frac{1}{n} \sum_{i=1}^n \big( y_i - \widehat f_{S_{i-1}} (x_i) \big)^2 \precsim   p^{1-\alpha}  \frac{ \big(p^{\alpha} \frac{n}{p} \big)^2 }{ \mu^2 \frac{n}{p}} \cdot \frac{1}{n} \precsim  \frac{p^\alpha}{\mu^2} = o(1)
			$$
		\end{itemize}
	\end{itemize}
	All the above scenarios correspond to the Red region in Figure~\ref{fig:snr}.
	
	For the ridge case with explicit regularization $\lambda_\star$. First notice that the Red region certainly correspond to $o(1)$ error with the choice $\lambda_\star = 0$. Below we only consider how to extend the region with a proper choice $\lambda_\star \neq 0$.
	\begin{itemize}
		\item Consider the spectral decay $\lambda_i(\Sigma) = i^{-\alpha}, i\in [p]$ with $\alpha\in (0, 1/2)$ and signal strength $\mu^2 \succsim p/n$ and overparametrization $p^{\alpha} \succsim n$ (the upper Yellow region in Figure~\ref{fig:snr}): 
		\begin{itemize}
			\item if $\mu^2 \precsim p$, then with the choice of $\max\{ n, \mu^2, p^{1-\alpha} \} \precsim \lambda_\star \precsim  p \lambda_1(\Sigma)$
			$$
			\frac{1}{n} \sum_{i=1}^n \big( y_i - \widehat f_{S_{i-1}} (x_i) \big)^2 \precsim \lambda_\star  \frac{ \frac{n}{\lambda_\star} ( 1+ o(1))}{ 1+ \mu^2 \frac{n}{p \lambda_1(\Sigma)}} \cdot \frac{1}{n} \precsim \frac{p/n}{\mu^2} = o(1)
			$$
			\item if $\mu^2 \succsim p$, then with the choice of $  \max\{ n, \mu^2, p \lambda_1(\Sigma) \} \precsim \lambda_\star \precsim n \mu^2$
			$$
			\frac{1}{n} \sum_{i=1}^n \big( y_i - \widehat f_{S_{i-1}} (x_i) \big)^2 \precsim \lambda_\star  \frac{ \frac{n}{\lambda_\star} ( 1+ o(1))}{ 1+ \mu^2 \frac{n}{\lambda_\star}} \cdot \frac{1}{n} \precsim \frac{\lambda_\star}{n\mu^2} = o(1)
			$$
		\end{itemize}
		\item Consider the spectral decay $\lambda_i(\Sigma) = i^{-\alpha}, i\in [p]$ with $\alpha\in (0, 1/2)$ and signal strength $\mu^2 \succsim p/n$ and overparametrization $p^{\alpha} \succsim \mu^2$ (the lower Yellow region in Figure~\ref{fig:snr}): with the choice of $p/\mu^2 \precsim \lambda_\star \precsim n$, we know that $\lambda_\star \succsim p/\mu^2 \succsim p^{1-\alpha}$ and $\lambda_\star \succsim p/\mu^2 \succsim \mu^2$ (due to $p^{\alpha} \succsim \mu^2$ and $\alpha<1/2$), and therefore
		 $$
		 \frac{1}{n} \sum_{i=1}^n \big( y_i - \widehat f_{S_{i-1}} (x_i) \big)^2 \precsim \lambda_\star  \frac{ \big( \frac{n}{\lambda_\star} \big)^2 }{ 1+ \mu^2 \frac{n}{p}} \cdot \frac{1}{n} \precsim \frac{p}{\mu^2 \lambda_\star} = o(1)
	     $$
	\end{itemize}

\section{Proofs for Section~\ref{sec:nonlinear-kernel}}

	\begin{proof}[of Lemma~\ref{lem:non-linear}]
		\begin{align}
			& \quad \E\big[ \| \widehat{f}_{S_n} \|_K^2 ~|~ X_n \big] \\
			&= \E \big[ \langle Y_n Y_n^\top, [K(X_n, X_n)]^{-1} \rangle  ~|~ X_n\big] \\
			&= \langle \eta_\star(X_n)\eta_\star(X_n)^\top, [K(X_n, X_n)]^{-1} \rangle + \langle {\rm diag}\{ 1- \eta_\star^2(x_1), \ldots, 1- \eta_\star^2(x_n) \}, [K(X_n, X_n)]^{-1} \rangle  \label{eqn:conditional}\\
			& \leq \| \eta_\star \|_{K}^2 + \sum_{i=1}^n \frac{1- \eta_\star^2(x_i)}{K(x_i, x_i) - K(x_i, X_{S_n \backslash i})[K(X_{S_n \backslash i}, X_{S_n \backslash i})]^{-1} K(X_{S_n \backslash i}, x_i)} \enspace.
		\end{align}
		Here the last line uses the Riesz representation theorem that $\eta_\star(x) = \langle \eta_\star, \phi_x \rangle_{K}$, and
		\begin{align}
			\label{eqn:proj}
			\langle \eta_\star(X_n)\eta_\star(X_n)^\top, [K(X_n, X_n)]^{-1} \rangle = \| \Pi_{\phi_{X_n}} \eta_\star \|_K^2 \leq \|  \eta_\star \|_K^2 \enspace.
		\end{align}
	\end{proof}

	\begin{proof}[of Lemma~\ref{lem:separation}]
		For the first claim, observe that 
		\begin{align}
			\E \big[ \frac{1}{n}\sum_{i=1}^n \big(1- \eta_\star^2(\bx_i)) \big] &= \int \big[ 1 - \big(\frac{\dif{P_+} - \dif{P_{-}} }{\dif{P_+} + \dif{P_{-}} }  \big)^2 \big] \frac{\dif{P_+} + \dif{P_{-}} }{2} \\
			&= \int \frac{2 \dif{P_+} \dif{P_{-}}}{\dif{P_+} + \dif{P_{-}}} \leq 2 \int   \dif{P_{+}} \wedge \dif{P_{-}} \enspace.
		\end{align}

		For the second claim, let's calculate the probability of each $x_i$ falls in the region
		\begin{align}
			\cS_{\epsilon} := \big\{x \in \cX ~|~ \frac{\dif{P_{+}}}{\dif{P_{-}}}(x) \in [\epsilon, \epsilon^{-1}] \big\} \enspace.
		\end{align}
		We have
		\begin{align}
			\mathbb{P} \big[ \bx \in \cS_\epsilon \big] &= \int_{x \in \cS_\epsilon} \frac{\dif{P_{+}} + \dif{P_{-}} }{2} \leq \frac{\epsilon^{-1} + 1}{2}  \int_{x \in \cS_\epsilon} \dif{P_{+}} \wedge \dif{P_{-}} \\
			&\leq \frac{\epsilon^{-1}+1}{2} \big(1 - d_{\rm TV}( P_+, P_{-}) \big) \enspace.
		\end{align}
		Therefore, with union bound, we find
		\begin{align}
			\mathbb{P} \big[ \bx_i \notin \cS_{\epsilon}, ~~\forall 1\leq i \leq n \big] &\geq \left[ 1 - \frac{\epsilon^{-1}+1}{2} \big(1 - d_{\rm TV}( P_+, P_{-}) \big) \right]^n \\
			&\geq 1 - \frac{\epsilon^{-1}+1}{2} n \big(1 - d_{\rm TV}( P_+, P_{-}) \big)  \enspace.
		\end{align}
		For $x_i \notin \cS_{\epsilon}$, one can verify
		\begin{align}
			1 - \eta_\star^2(x_i) \leq 1 - \big(\frac{1-\epsilon}{1+\epsilon}\big)^2 \leq 4\epsilon \enspace.
		\end{align}
	\end{proof}

\end{document}